\newcommand\independent{\protect\mathpalette{\protect\independenT}{\perp}}
\def\independenT#1#2{\mathrel{\rlap{$#1#2$}\mkern2mu{#1#2}}}
\newcommand\norm[1]{\left\lVert#1\right\rVert}
\theoremstyle{plain}
\newtheorem{theorem}{Theorem}[section]
\theoremstyle{definition}
\theoremstyle{remark}
\icmltitlerunning{Estimating Causal Effects using a Multi-task Deep Ensemble}
\begin{document}

\twocolumn[
\icmltitle{Estimating Causal Effects using a Multi-task Deep Ensemble}



\icmlsetsymbol{equal}{*}

\begin{icmlauthorlist}
\icmlauthor{Ziyang Jiang}{duke_civil}
\icmlauthor{Zhuoran Hou}{duke_biostats}
\icmlauthor{Yiling Liu}{duke_comb}
\icmlauthor{Yiman Ren}{umich}
\icmlauthor{Keyu Li}{duke_ee}
\icmlauthor{David Carlson}{duke_civil,duke_biostats,duke_ee,duke_cs}
\end{icmlauthorlist}

\icmlaffiliation{duke_civil}{Department of Civil and Environmental Engineering, Duke University, Durham, NC, USA}
\icmlaffiliation{duke_biostats}{Department of Biostatistics and Bioinformatics, Duke University School of Medicine, Durham, NC, USA}
\icmlaffiliation{duke_comb}{Program in Computational Biology and Bioinformatics, Duke University School of Medicine, Durham, NC, USA}
\icmlaffiliation{duke_ee}{Department of Electrical and Computer Engineering, Duke University, Durham, NC, USA}
\icmlaffiliation{umich}{Department of Economics, University of Michigan Ross School of Business, Ann Arbor, MI, USA}
\icmlaffiliation{duke_cs}{Department of Computer Science, Duke University, Durham, NC, USA}

\icmlcorrespondingauthor{David Carlson}{david.carlson@duke.edu}

\icmlkeywords{Machine Learning, ICML}

\vskip 0.3in
]



\printAffiliationsAndNotice{}  

\begin{abstract}
A number of methods have been proposed for causal effect estimation, yet few have demonstrated efficacy in handling data with complex structures, such as images. To fill this gap, we propose Causal Multi-task Deep Ensemble (CMDE), a novel framework that learns both shared and group-specific information from the study population. We provide proofs demonstrating equivalency of CDME to a multi-task Gaussian process (GP) with a coregionalization kernel \emph{a priori}. Compared to multi-task GP, CMDE efficiently handles high-dimensional and multi-modal covariates and provides pointwise uncertainty estimates of causal effects. We evaluate our method across various types of datasets and tasks and find that CMDE outperforms state-of-the-art methods on a majority of these tasks.
\end{abstract}

\section{Introduction}
\label{sec:1}
Estimating the causal effect of an action is a fundamental step in determining whether it is significant enough to change human behavior in real-world settings. This is commonly used to help us understand the potential impact of an action and to inform decision-making. For example, governments often judge the impact of an implemented policy by gauging public opinion regarding said policy \cite{page1983effects}. Researchers can assess the efficacy and risk of a medical procedure through the use of both clinical trials, which examine the medical conditions of patients both with and without having the treatment, and observational data, such as electronic health records \cite{jensen2012mining,zhang2019medical}. In recent years, people have come up with a variety of approaches that leverage machine learning models to discover causal relationships \cite{guo2020survey,li2022survey}. Such methods include, for example, Targeted Maximum Likelihood Estimator \cite{van2006targeted},  Bayesian Additive Regression Trees \cite{chipman2010bart}, and Double/Debiased Machine Learning methods \cite{chernozhukov2018double}. Many recent studies focus on learning the individualized treatment effect (ITE) or conditional average treatment effect (CATE) by using deep learning models (see a more comprehensive review in Section \ref{sec:4}) or meta-learners \cite{kunzel2019metalearners}.

As technology progresses, an increasing number of datasets containing more complex and comprehensive infromation have become available for causal analysis. These datasets often include high-dimensional covariates, such as images, posing unique challenges for causal inference. While previous methods have demonstrated promising performance on a variety of causal inference tasks, including the Infant Health and Development (IHDP \cite{brooks1992effects}), Twins \cite{almond2005costs}, and Jobs \cite{lalonde1986evaluating}, few have been specifically evaluated on datasets with high-dimensional and multi-modal structures. In some situations, these complex covariates play a significant role in causal analysis. For instance, neuroimaging is essential in studying how human brain solves problems with multi-sensory causal inference \cite{kayser2015multisensory}, and brain imaging has been used to forecast treatment response in depression \cite{drysdale2017resting}, showing that information about causal relationships are embedded in complex data types. This highlights the need for further research on methods that can effectively handle high-dimensional and multi-modal covariates in causal analysis. In this paper, we propose a deep learning framework to address this challenge. Our main contributions are summarized as follows: 
\begin{itemize}
\item We propose the Causal Multi-task Deep Ensemble (CMDE) framework which estimates the CATE by learning both shared and group-specific information from control and treatment groups in the study population using separate neural networks.
\item We demonstrate the relationship between CMDE and multi-task Gaussian process (GP) framework \cite{alaa2017bayesian} both through analytical proof and empirical evaluation.
\item We propose an alternative configuration of CMDE to handle covariates in a multi-modal setting (e.g., images and tabular data).
\item By conducting experiments on semi-synthetic and real-world datasets containing covariates with various number of dimensions and modalities, we show that CMDE outperforms the state-of-the-art methods by improving estimation of the treatment effects.
\end{itemize}


\section{Background}
\label{sec:2}
\subsection{Problem Setup}
\label{sec:2.1}
We will use lower-case letters (e.g., $x, t, y$) for individual samples, upper-case letters (e.g., $X, T, Y$) for random variables, and bold upper-case letters (e.g., $\boldsymbol{X}, \boldsymbol{T}, \boldsymbol{Y}$) for a set of samples throughout the paper. We consider a general setting in causal inference where we assign a specific treatment to a group of individuals. Each individual is represented by a $D$-dimensional feature vector $X \in \mathcal{X} \subset \mathbb{R}^D$ ($\mathcal{X}$ denotes the training input space) and is associated with a treatment-assignment indicator $T \in \{0, 1\}$. The corresponding \emph{potential outcomes} are denoted by $Y^{(0)} \in \mathbb{R}$ and $Y^{(1)} \in \mathbb{R}$ where the superscripts $0$ and $1$ represent assignment to the control group and the treatment group, respectively. We assume there exist a joint distribution $P\left(X, T, Y^{(0)}, Y^{(1)}\right)$ which satisfies $0 < P(T=1|X) < 1$ and the strong ignorability assumption $\left(Y^{(0)}, Y^{(1)}\right) \independent T|X$ as given in the Rubin-Neyman causal model \cite{rosenbaum1983central,rubin2005causal}. Our goal is to estimate the \emph{conditional average treatment effect} (CATE) from a training dataset containing $N$ data points $\mathcal{D} = \left\{ x_i, t_i, y_i^{(t)} \right\}_{i=1}^{N}$ 
where CATE can be computed as $\text{CATE} := \mathbb{E} \left[Y^{(1)} - Y^{(0)}|X\right]$. We denote $Y^{(T)}$ and $Y^{(1-T)}$ as \emph{factual} and \emph{counterfactual} outcomes, respectively. That is, we have $Y^{(T)} = (1-T)Y^{(0)} + TY^{(1)}$ and $Y^{(1-T)} = TY^{(0)} + (1-T)Y^{(1)}$.

\subsection{Multi-task Gaussian Processes (GPs)}
\label{sec:2.2}
We first introduce the background of multi-task GPs. A GP is a stochastic process that is completely defined by a mean function $\mu: \mathcal{X} \rightarrow \mathbb{R}$ and a kernel function $k: \mathcal{X} \times \mathcal{X} \rightarrow \mathbb{R}$ \cite{rasmussen2003gaussian}. Without loss of generality, we assume that the mean function $\mu(\boldsymbol{x})$ is zero for simplicity, as is common in the literature. A single-output function $f: \mathcal{X} \rightarrow \mathbb{R}$ following a GP is written as
\begin{equation}
f \sim \mathcal{GP} \left( 0, k \right).
\end{equation}
For any finite subset $\boldsymbol{X} = \{x_i\}_{i=1}^N$, $f(\boldsymbol{X})$ follows a multivariate Gaussian distribution with mean zero and covariance matrix $k(\boldsymbol{X}, \boldsymbol{X}) \in \mathbb{R}^{N \times N}$ with entries $k(x_i, x_j)$ where $1 \leq i, j \leq N$. We extend a GP to a multi-task learning scenario by defining a \emph{vector-valued} function $\textbf{f}: \mathcal{X} \rightarrow \mathbb{R}^C$, and we can write the corresponding multi-task GP as
\begin{equation}
\textbf{f} \sim \mathcal{GP} \left( \boldsymbol{0}, \textbf{K} \right),
\end{equation}
where $\textbf{K}: \mathcal{X} \times \mathcal{X} \rightarrow \mathbb{R}^{C \times C}$ denotes a \emph{matrix-valued} kernel function. Again, any finite subset $\textbf{f}(\boldsymbol{X}) \in \mathbb{R}^{N \times C}$ follows a multivariate Gaussian distribution with mean zero and covariance matrix $\textbf{K}(\boldsymbol{X}, \boldsymbol{X}) \in \mathbb{R}^{NC \times NC}$ constructed from the set of kernel values $\left(\textbf{K}\left(x_i, x_j\right)\right)_{c,c'}$ with $1 \leq i, j \leq N$ and $1 \leq c, c' \leq C$, giving the form $\text{vec}(\textbf{f}(\boldsymbol{X}))\sim \mathcal{N}(\boldsymbol{0},\textbf{K}(\boldsymbol{X}, \boldsymbol{X}))$ where $\text{vec}(\cdot)$ denotes vectorization which transforms $\textbf{f}(\boldsymbol{X})$ from $\mathbb{R}^{N\times C}$ to $\mathbb{R}^{NC}$. 

\citet{alaa2017bayesian} used a multi-task GP for causal inference by setting the number of tasks equal to the number of potential outcomes, which is $C = 2$ for a binary treatment $T \in \{0, 1\}$. Here, $\textbf{f} = [f_0, f_1]^T$, where $f_0$ approximates the potential outcome for $T=0$, and $f_1$ approximates the potential outcome for $T=1$. Defining $\textbf{e} = [-1,1]^{T}$, we can then approximate CATE as 
\begin{equation}
\begin{split}
\text{CATE}(x) :=& \mathbb{E} \left[Y^{(1)} - Y^{(0)}|X\right] \\ =& \textbf{f}^{T}(x)\textbf{e} = f_1(x)-f_0(x).
\end{split} 
\end{equation}


\subsection{Coregionalization Models}
\label{sec:2.3}
A common approach to construct a multi-task GP is to use coregionalization models \cite{alvarez2012kernels}. For example, we can construct the matrix-valued kernel function $\textbf{K}$ from a single-output kernel by using the \emph{Intrinsic Coregionalization Model} (ICM \cite{goovaerts1997geostatistics}),
\begin{equation}
\textbf{K}_{\text{ICM}}(x,x') = k(x,x') \textbf{B},
\end{equation}
where $k: \mathcal{X} \times \mathcal{X} \rightarrow \mathbb{R}$ is a scalar-valued kernel function and $\textbf{B} \in \mathbb{R}^{C \times C}$ is called a \emph{coregionalization matrix}. If a function follows $\textbf{f} \sim \mathcal{GP} \left( \boldsymbol{0}, \textbf{K}_{\text{ICM}} \right)$, then each of its entries $\textbf{f}_c$ can be expressed as a linear combination of functions sampled from a GP with zero mean and covariance function $k$. That is, for a coregionalization matrix with $\text{rank}(\textbf{B})=R$, we have $\textbf{f}_c(x) = \sum_{r=1}^R a_c^r u^r(x)$ where $u^r(x) \sim \mathcal{GP} (0, k)$ for all $r \in [1,R]$. 

We can construct a more generalized model by using a \emph{Linear Model of Coregionalization} (LMC \cite{journel1976mining,goovaerts1997geostatistics}) to define $\textbf{K}$ ,
\begin{equation}
\textbf{K}_{\text{LMC}}(x,x') = \sum_{q=1}^Q k_q(x,x') \textbf{B}_q,
\end{equation}
where $k_q: \mathcal{X} \times \mathcal{X} \rightarrow \mathbb{R}$ is again a scalar-valued kernel function and $\textbf{B}_q \in \mathbb{R}^{C \times C}$ for all $q \in [1,Q]$. It is straightforward to see that the LMC can be viewed as a mixture of $Q$ ICMs. Likewise, if a function follows $\textbf{f} \sim \mathcal{GP} \left( \boldsymbol{0}, \textbf{K}_{\text{LMC}} \right)$, then we have $\textbf{f}_c(x) = \sum_{q=1}^Q \sum_{r=1}^{R_q} a_{c,q}^r u_q^r(x)$ where $u_q^r(x) \sim \mathcal{GP}(0, k_q)$ for all $r = 1, ..., R_q$ with $\text{rank}(\textbf{B}_q)=R_q$ and all $q = 1, ..., Q$.

\subsection{Relationship between NNs and GPs}
\label{sec:2.4}
As stated by \citet{matthews2018gaussian}, a random deep NN with appropriate activation function will converge \emph{in distribution} to a GP. Specifically, let $f_{\text{NN}}: \mathcal{X} \rightarrow \mathbb{R}$ be a function implemented by a NN with zero-mean i.i.d. parameters and continuous activation function $\phi$ which satisfies the following linear envelope property:
\begin{equation}
|\phi(u)| \leq \beta + m|u| \quad \forall u \in \mathbb{R}, 
\label{eq:5}
\end{equation}
if there exist $\beta, m \geq 0$. This property is satisfied by many common nonlinearities (e.g., ReLU, softplus, tanh, etc.). \textcolor{black}{Functions that violate this property (e.g., exponential) will induce heavy-tail behavior in the post activation.} Under these conditions, $f_{\text{NN}}$ will converge \emph{in distribution} to a GP in the infinite width limit,
\begin{equation}
f_{\text{NN}} \xrightarrow[]{d} \mathcal{GP} \left( \boldsymbol{0}, k_{\text{NN}} \right),
\end{equation}
where $k_{\text{NN}}: \mathcal{X} \times \mathcal{X} \rightarrow \mathbb{R}$ is a NN-implied kernel function and can be numerically estimated in a recursive manner \cite{lee2017deep}. However, for a single NN without a Bayesian formulation, its prediction can only be viewed as a \emph{sample} corresponding to a GP prior. To enable a full GP posterior interpretation, we adopt the sample-then-optimize approach proposed by \citet{matthews2017sample} by constructing and training a deep ensemble as we will discuss in the following section.



\section{Causal Multi-task Deep Ensemble}
\label{sec:3}

\begin{figure}[t!]
\centering
\includegraphics[width=0.47\textwidth]{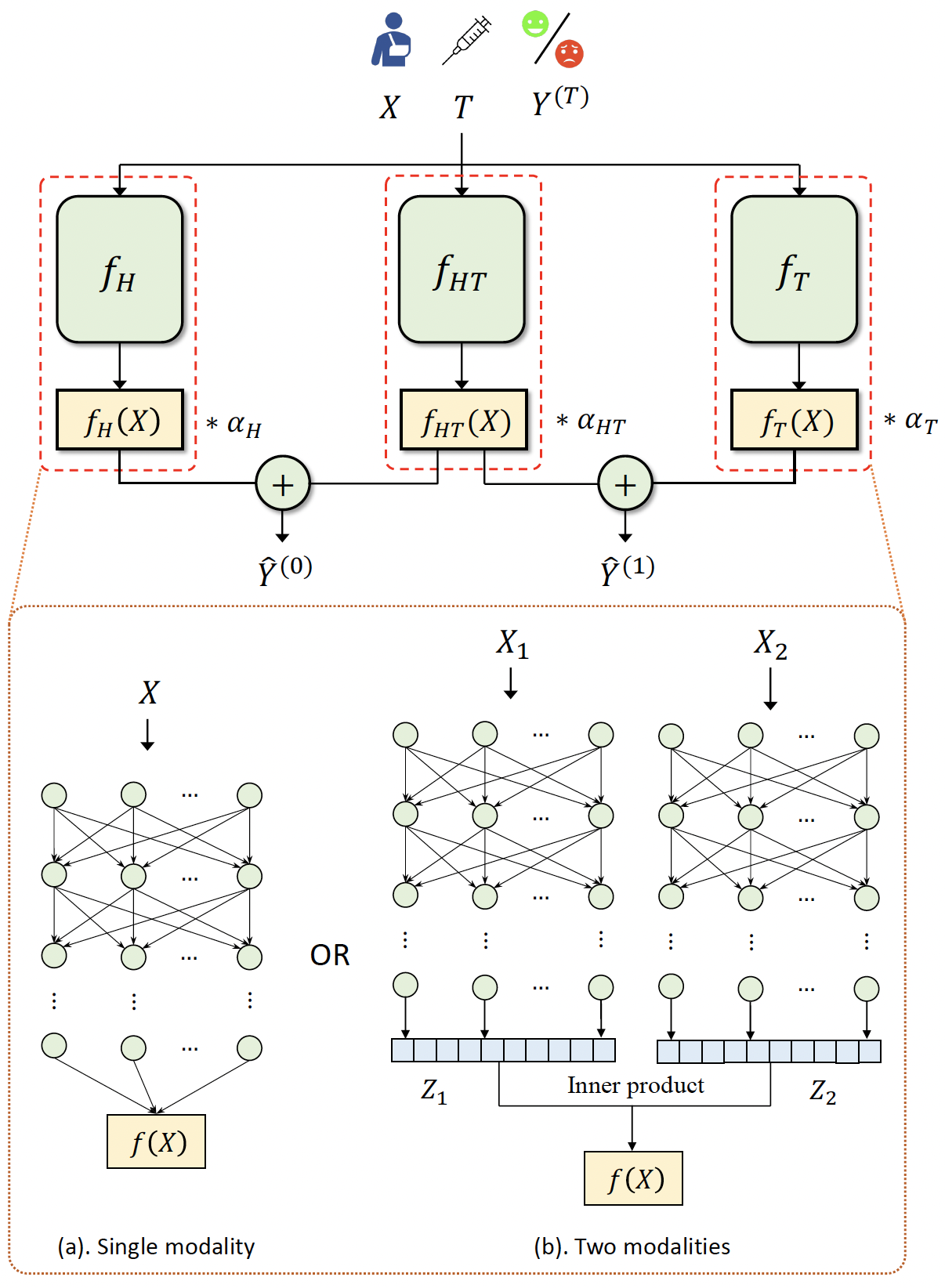}
\caption{The overall architecture of \textbf{a baselearner} in our causal multi-task deep ensemble where $f$ in (a) and (b) can be one of the $f_H, f_{HT},$ or $f_T$. Here the treatment assignment indicator $T$ is only used to obtain the corresponding factual outcome $Y^{(T)}$ for training and is not passed into $f_H$, $f_T$, or $f_{HT}$ as an input.}
\label{fig:1}
\vspace{-2.5mm}
\end{figure}

Here, we formally present our Causal Multi-task Deep Ensemble (CMDE) framework and elaborate on its relationship to coregionalization models. The ensemble's predictions are made by averaging over all its \emph{baselearners}. The architecture of a single baselearner in our ensemble is depicted in Figure \ref{fig:1} where features $X$ are passed into 3 neural networks $f_H, f_{T}, f_{HT}: \mathcal{X} \rightarrow \mathbb{R}$ separately as shown in Figure \ref{fig:1}a. Each baselearner follows the same architecture but has a different random initialization, which we will show corresponds to different draws from a multi-task GP prior. We expect $f_H$ and $f_T$ to learn group-specific information from control and treatment group, respectively, and $f_{HT}$ to learn shared information between the two groups. Each baselearner learns a multi-output function $\hat{\textbf{f}} = [\hat{f}_0, \hat{f}_1]^T$ which generates two outputs $\hat{Y}^{(0)}$ and $\hat{Y}^{(1)}$ representing the \emph{potential outcomes} for treatment assignment $T \in \{0,1\}$:
\begin{align}
&\hat{Y}^{(0)} = \hat{f}_0(X) \coloneqq \alpha_H f_H(X) + \alpha_{HT} f_{HT}(X) \label{eq:7}, \\
&\hat{Y}^{(1)} = \hat{f}_1(X) \coloneqq \alpha_{HT} f_{HT}(X) + \alpha_T f_T(X) \label{eq:8},
\end{align}
where $\alpha_H$, $\alpha_T$, and $\alpha_{HT}$ are trainable parameters that need to be initialized \emph{a priori}. With this formulation, we claim the following theorem.
\begin{theorem}
\vspace{2mm}
If all $f_H$, $f_T$, and $f_{HT}$ are neural networks with identical depth, zero-mean i.i.d. parameters with the same variance, and continuous activation function $\phi$ which satisfies the linear envelope property given in (\ref{eq:5}), then $\hat{\textbf{f}}$ converges in distribution to a GP with zero mean and ICM kernel in the infinite width limit \textbf{a priori}:
\begin{equation}
\hat{\textbf{f}} \xrightarrow[]{d} \mathcal{GP} \left( \boldsymbol{0}, \textbf{K}_{\text{ICM}} \right),
\end{equation}
where $\textbf{K}_{\text{ICM}}(x,x') = k_{\text{NN}}(x,x') \textbf{B}$, $x, x' \in \mathcal{X}$, and $k_{\text{NN}}$ is the kernel function implied by $f_H$, $f_T$, and $f_{HT}$, and
\begin{equation}
\textbf{B} = 
\begin{bmatrix}
\alpha_H^2 + \alpha_{HT}^2 & \alpha_{HT}^2 \\
\alpha_{HT}^2 & \alpha_T^2 + \alpha_{HT}^2
\end{bmatrix}.
\end{equation}
\label{thm:3.1}
\end{theorem}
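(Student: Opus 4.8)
The plan is to reduce the claim to a statement about finite-dimensional marginals, invoke the single-network Gaussian-process limit of \cref{sec:2.4} for each of the three constituent networks, combine those marginal limits into one joint limit using the independence of the networks, and finally transport that joint limit through the fixed linear map defined by \eqref{eq:7}--\eqref{eq:8}. Since a GP is characterized entirely by the means and covariances of its finite-dimensional marginals, it suffices to fix an arbitrary finite input set $\boldsymbol{X} = \{x_i\}_{i=1}^N$ and show that $\text{vec}(\hat{\textbf{f}}(\boldsymbol{X}))$ converges in distribution, as the width grows, to a zero-mean multivariate Gaussian whose $NC \times NC$ covariance is the one assembled from $\textbf{K}_{\text{ICM}}(x_i,x_j) = k_{\text{NN}}(x_i,x_j)\textbf{B}$ under the convention of \cref{sec:2.2}.

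First I would apply the result recalled in \cref{sec:2.4} from \citet{matthews2018gaussian} separately to $f_H$, $f_T$, and $f_{HT}$. Because the three networks have identical depth, zero-mean i.i.d.\ parameters of the same variance, and the same activation $\phi$ obeying the linear-envelope property \eqref{eq:5}, each of the evaluated output vectors $f_H(\boldsymbol{X})$, $f_T(\boldsymbol{X})$, and $f_{HT}(\boldsymbol{X})$ converges in distribution to $\mathcal{N}(\boldsymbol{0}, k_{\text{NN}}(\boldsymbol{X},\boldsymbol{X}))$ with one and the same kernel $k_{\text{NN}}$; the shared-variance hypothesis is exactly what forces this common kernel.

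The key step is to promote these three separate marginal limits to a single joint one. Here I would use that the parameters are drawn independently across the three networks, so that for every finite width $f_H(\boldsymbol{X})$, $f_T(\boldsymbol{X})$, and $f_{HT}(\boldsymbol{X})$ are mutually independent. By the standard fact that marginal convergence in distribution of independent sequences implies joint convergence to the product of the limit laws (seen at once by factoring characteristic functions and applying Lévy continuity), the stacked vector then converges to $(u_H, u_{HT}, u_T)$ evaluated at $\boldsymbol{X}$, where $u_H, u_{HT}, u_T$ are three \emph{independent} copies of $\mathcal{GP}(0, k_{\text{NN}})$. I expect this to be the main obstacle, since marginal convergence in distribution does not by itself entail joint convergence; it is precisely the cross-network independence that makes the passage legitimate.

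Finally, \eqref{eq:7}--\eqref{eq:8} exhibit $\hat{\textbf{f}} = \textbf{A}\,[f_H, f_{HT}, f_T]^{\top}$ for the fixed loading matrix $\textbf{A} = \begin{bmatrix} \alpha_H & \alpha_{HT} & 0 \\ 0 & \alpha_{HT} & \alpha_T \end{bmatrix}$. As $v \mapsto \textbf{A}v$ is continuous, the continuous mapping theorem carries the joint Gaussian limit over to $\hat{\textbf{f}}(\boldsymbol{X})$, which is therefore zero-mean Gaussian. Its covariance follows by bilinearity together with the independence of $u_H, u_{HT}, u_T$ and their common kernel: all cross terms between distinct latent processes vanish, leaving $\text{Cov}(\hat{f}_c(x), \hat{f}_{c'}(x')) = (\textbf{A}\textbf{A}^{\top})_{c,c'}\, k_{\text{NN}}(x,x')$. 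A one-line multiplication gives $\textbf{A}\textbf{A}^{\top} = \textbf{B}$ with the entries stated in the theorem, which is the ICM representation $\textbf{f}_c(x) = \sum_{r} a_c^r u^r(x)$ of \cref{sec:2.3} realized with $R = 3$ latent processes. This establishes $\hat{\textbf{f}} \xrightarrow[]{d} \mathcal{GP}(\boldsymbol{0}, \textbf{K}_{\text{ICM}})$ with $\textbf{K}_{\text{ICM}} = k_{\text{NN}}\textbf{B}$, as claimed.
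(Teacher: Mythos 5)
Your proposal is correct, and its algebraic core coincides with the paper's: both arguments kill the cross-terms between distinct networks using the \emph{a priori} independence and zero mean of their parameters, both use the identical depth/initialization to identify a single common kernel $k_{\text{NN}}$, and both arrive at the same coregionalization matrix $\textbf{B}$ (your $\textbf{A}\textbf{A}^{\top}$ is exactly the matrix assembled entrywise in (\ref{eq:14})--(\ref{eq:17})). The organization differs, though, in a way that matters. The paper's proof is purely a second-moment computation: it evaluates the four entries of $\text{cov}(\hat{\textbf{f}}(x),\hat{\textbf{f}}(x'))$ in (\ref{eq:13}), passes to the infinite-width limit of each entry, and declares the theorem proved once the covariance matches $k_{\text{NN}}\textbf{B}$; it never explicitly argues that the limiting finite-dimensional distributions are jointly Gaussian. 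Your route --- marginal GP limits for $f_H$, $f_{HT}$, $f_T$ from \citet{matthews2018gaussian}, promotion to a joint limit with independent components via characteristic functions, then the continuous mapping theorem through the fixed linear map $\textbf{A}$ --- supplies precisely that missing distributional step, and correctly isolates it as the only nontrivial probabilistic obstacle (marginal convergence alone would not suffice without cross-network independence). In exchange, the paper's entrywise computation is more directly reusable: it is the template that Appendices \ref{appx:A}--\ref{appx:C} repeat verbatim for the LMC, multi-treatment, and two-network variants, whereas your factorization $\hat{\textbf{f}}=\textbf{A}[f_H,f_{HT},f_T]^{\top}$ also cleanly exhibits the result as the rank-$3$ ICM representation $\textbf{f}_c=\sum_r a_c^r u^r$ from \cref{sec:2.3}. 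In short: same calculation, but your write-up is the more complete convergence argument.
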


\begin{proof}
To prove Theorem \ref{thm:3.1}, we calculate the covariance matrix between $\hat{\textbf{f}}(x)$ and $\hat{\textbf{f}}(x')$ as below (note that the expectations are taken with respect to the parameters of the functions inside the expectation):
{\allowdisplaybreaks
\begin{align}
&\text{cov} \left( \hat{\textbf{f}}(x), \hat{\textbf{f}}(x') \right) \\
&= \mathbb{E} \left[ \hat{\textbf{f}}(x)\hat{\textbf{f}}(x')^T \right] - \mathbb{E} \left[ \hat{\textbf{f}}(x) \right] \mathbb{E} \left[ \hat{\textbf{f}}(x') \right]^T \label{eq:12} \\
&= 
\begin{bmatrix}
\mathbb{E} [\hat{f}_0(x)\hat{f}_0(x')] & \mathbb{E} [\hat{f}_0(x)\hat{f}_1(x')] \\
\mathbb{E} [\hat{f}_1(x)\hat{f}_0(x')] & \mathbb{E} [\hat{f}_1(x)\hat{f}_1(x')]
\end{bmatrix}.
\label{eq:13}
\end{align}} 

From (\ref{eq:12}) to (\ref{eq:13}), we drop the term $\mathbb{E} \left[ \hat{\textbf{f}}(x) \right] \mathbb{E} \left[ \hat{\textbf{f}}(x') \right]^T$ as all the parameters in $f_H$, $f_T$, and $f_{HT}$ are initialized by i.i.d. zero mean random variables. We can then calculate each entry in (\ref{eq:13}) separately as follows:
\begin{align}
&\mathbb{E} [\hat{f}_0(x)\hat{f}_0(x')] \nonumber \\ 
&= \alpha_H^2 \mathbb{E}[f_H(x)f_H(x')] + \alpha_{HT}^2 \mathbb{E}[f_{HT}(x)f_{HT}(x')], \label{eq:14} \\
&\mathbb{E}[\hat{f}_0(x)\hat{f}_1(x')] 
= \alpha_{HT}^2 \mathbb{E}[f_{HT}(x)f_{HT}(x')], \label{eq:15} \\
&\mathbb{E} [\hat{f}_1(x)\hat{f}_0(x')] = \mathbb{E}[\hat{f}_0(x)\hat{f}_1(x')] \nonumber \\
&= \alpha_{HT}^2 \mathbb{E}[f_{HT}(x)f_{HT}(x')], \label{eq:16} \\
&\mathbb{E} [\hat{f}_1(x)\hat{f}_1(x')] \nonumber \\
&= \alpha_T^2 \mathbb{E}[f_T(x)f_T(x')] + \alpha_{HT}^2 \mathbb{E}[f_{HT}(x)f_{HT}(x')].
\label{eq:17}
\end{align}

From (\ref{eq:14}) to (\ref{eq:17}), we make use of the fact that the parameters of $f_H$, $f_T$, and $f_{HT}$ are independent of each other \textit{a priori}. Also, since $f_H$, $f_T$, and $f_{HT}$ share the same depth and initialization strategy, then as elaborated in Section \ref{sec:2.4}, we have $\mathbb{E}[f_H(x)f_H(x')] = \mathbb{E}[f_T(x)f_T(x')] = \mathbb{E}[f_{HT}(x)f_{HT}(x')] = k_{\text{NN}}(x,x')$ in the infinite width limit. Therefore, as the width of $f_H$, $f_T$, and $f_{HT}$ goes to infinity, we have:
\begin{align*}
&\mathbb{E} [\hat{f}_0(x)\hat{f}_0(x')] = (\alpha_H^2 + \alpha_{HT}^2) k_{\text{NN}}(x,x'), \\
&\mathbb{E}[\hat{f}_0(x)\hat{f}_1(x')] = \mathbb{E} [\hat{f}_1(x)\hat{f}_0(x')] = \alpha_{HT}^2 k_{\text{NN}}(x,x'), \\
&\mathbb{E} [\hat{f}_1(x)\hat{f}_1(x')] = (\alpha_T^2 + \alpha_{HT}^2) k_{\text{NN}}(x,x').
\end{align*}
By substituting the equations above back into (\ref{eq:13}), we get
\begin{equation}
\begin{split}
&\text{cov} \left( \hat{\textbf{f}}(x), \hat{\textbf{f}}(x') \right) = \\
&k_{\text{NN}}(x,x')
\begin{bmatrix}
\alpha_H^2 + \alpha_{HT}^2 & \alpha_{HT}^2 \\
\alpha_{HT}^2 & \alpha_T^2 + \alpha_{HT}^2
\end{bmatrix},
\end{split}
\end{equation}
which completes our proof for Theorem \ref{thm:3.1}.    
\end{proof}

In addition, by following a similar approach, we can also construct $\hat{\textbf{f}} = [\hat{f}_0, \hat{f}_1]^T$ as below:
\begin{align}
&\hat{f}_0(X) \coloneqq \sum_{q=1}^Q \alpha_H^q f_H^q(X) + \alpha_{HT}^q f_{HT}^q(X), \\
&\hat{f}_1(X) \coloneqq \sum_{q=1}^Q \alpha_{HT}^q f_{HT}^q(X) + \alpha_T^q f_T^q(X),
\end{align}
where $f_H^q$, $f_T^q$, and $f_{HT}^q$ share the same depth and initialization strategy for the same value of $q$. With this formulation, it can be shown that $\hat{\textbf{f}}$ converges \emph{in distribution to} a GP with zero mean and LMC kernel in the infinite width limit \emph{a priori} (see detailed proof in Appendix \ref{appx:A}):
\begin{equation}
\hat{\textbf{f}} \xrightarrow[]{d} \mathcal{GP} \left( \boldsymbol{0}, \textbf{K}_{\text{LMC}} \right),
\end{equation}
where $\textbf{K}_{\text{LMC}}(x,x') = \sum_{q=1}^Q k_{\text{NN}}^q(x,x') \textbf{B}_q$. Here $k_{\text{NN}}^q$ is the kernel function implied by $f_H^q$, $f_T^q$, or $f_{HT}^q$ and
\begin{equation}
\textbf{B}_q = 
\begin{bmatrix}
\left(\alpha_H^q\right)^2 + \left(\alpha_{HT}^q\right)^2 & \left(\alpha_{HT}^q\right)^2 \\
\left(\alpha_{HT}^q\right)^2 & \left(\alpha_T^q\right)^2 + \left(\alpha_{HT}^q\right)^2
\end{bmatrix}.
\end{equation}
In theory, our method can also be extended to the multiple-treatment case $T \in \{1,...,C\}$, where we can construct $\hat{\textbf{f}} = \left[ \hat{f}_{1}, ..., \hat{f}_{C} \right]^T$ as follows:
\begin{equation}
\begin{split}
\hat{f}_c(X) &\coloneqq \sum_{d=1}^{c-1} \alpha_{dc}f_{dc}(X) + \alpha_c f_c(X) + \sum_{d=c+1}^{C} \alpha_{cd}f_{cd}(X) \\ &\forall \; c = 1, ..., C,
\end{split}
\end{equation}
where $f_c$ learns the group-specific information and $f_{dc}, f_{cd}$ learn the shared information. With this formulation, we can also prove that $\hat{\textbf{f}}$ converges in distribution to a GP with an ICM kernel as elaborated in Appendix \ref{appx:B}. Our framework can also be simplified so each baselearner only contains 2 networks as shown in Appendix \ref{appx:C}. However, we will stick to the 3-network architecture in our experiments as it facilitates the explanation of the role of each network and enhances the clarity of our statement. \textcolor{black}{Note that we only state the equivalence between CMDE and a multi-task GP \emph{a priori}. According to \citet{he2020bayesian}, the equivalence between a deep ensemble and a GP may still hold \emph{a posteriori} (i.e., after training) if we augment the forward pass of each NN in the baselearner by adding a random and untrainable function $\delta(\cdot)$. However, we do not claim this equivalence, as the parameters of $f_H$, $f_T$, and $f_{HT}$ become dependent on each other. Consequently, we are unable to eliminate the cross-terms when calculating $\text{cov} \left( \hat{\textbf{f}}(x), \hat{\textbf{f}}(x') \right)$.}

\subsection{Extension to Multi-modal Covariates}
\label{sec:3.1}
In some cases, the covariates $X$ contain multiple modalities (e.g., $X = \left\{X_1, X_2\right\}$ where $X_1$ is an image and $X_2$ is in a tabular format). As illustrated in Figure \ref{fig:1}b, we adapt CMDE to such situations by introducing an inner product between the extracted representations from each modalities of $X$. Specifically, let $Z_m$ be the extracted representation from the $m^{th}$ input modality by a neural network. We can construct $f$ as follows
\begin{equation}
f(X) \coloneqq \sum_j \prod_m (Z_m)_j,
\end{equation}
where $(Z_m)_j$ represents the $j^{th}$ entry in $Z_m$ and $f$ is one of the $f_H$, $f_{HT}$, or $f_T$. As proved by Lee et al. \citeyearpar{lee2017deep} and Jiang et al. \citeyearpar{jiang2022incorporating}, this mechanism yields a multiplicative kernel $k_{\text{mul}} = \prod_m \left(k_{\text{NN}}\right)_m$ where $\left(k_{\text{NN}}\right)_m$ is the kernel function implied by the neural network used to extract the representation $Z_m$ from the input modality $X_m$. With this formulation, the multi-output function $\hat{\textbf{f}}$ still converges to an ICM or LMC kernel (depending on how we construct $\hat{\textbf{f}}$) except that we replace $k_{\text{NN}}$ with $k_{\text{mul}}$. 

\subsection{Training of CMDE}
\label{sec:3.2}
A common goal in causal inference is to minimize the \emph{precision in estimating heterogeneous effect} (PEHE \cite{hill2011bayesian}) loss, which is defined as
\begin{equation}
\begin{split}
&\mathcal{\hat{L}}\left(\hat{\textbf{f}}; \boldsymbol{Y}^{(\boldsymbol{T})}, \boldsymbol{Y}^{(\boldsymbol{1-T})}\right) = \\
&\frac{1}{N} \sum_{i=1}^{N}
\left(\hat{\textbf{f}}^{T}(x_{i})\textbf{e}-(1-2t_{i}) \left(y_{i}^{(1-t_{i})}-y_{i}^{(t_{i})} \right) \right)^{2}, 
\end{split}
\end{equation}
where $\textbf{e} = [-1,1]^{T}$, $\boldsymbol{Y}^{(\boldsymbol{T})} = \left\{y_{i}^{(t_{i})}\right\}_{i=1}^N$ are the factual outcomes, and $\boldsymbol{Y}^{(\boldsymbol{1-T})} = \{y_{i}^{(1-t_{i})}\}_{i=1}^N$ are the counterfactual outcomes.  To train CMDE, we want to minimize the following regularized empirical loss,
\begin{equation}
\hat{\textbf{f}}^{*} = \arg \,\!\min_{\hat{\textbf{f}} \in \mathcal{H_{\textbf{K}}}} \mathcal{\hat{L}}\left(\hat{\textbf{f}}; \boldsymbol{Y}^{(\boldsymbol{T})}, \boldsymbol{Y}^{(\boldsymbol{1-T})}\right) + \lambda \|\hat{\textbf{f}}\|^{2}_{\mathcal{H_{\textbf{K}}}},
\end{equation}
where $\mathcal{H_{\textbf{K}}}$ is a \emph{vector-valued Reproducing Kernel Hilbert Space} (vvRKHS) equipped with an inner product $\langle \cdot \; , \; \cdot \rangle_{\mathcal{H_{\textbf{K}}}}$, and reproducing kernel $\textbf{K}: \mathcal{X} \times \mathcal{X} \rightarrow \mathbb{R}^{2 \times 2}$. The regularization term smooths the loss based on the GP prior.

However, we cannot compute PEHE directly because we only observe the factual outcomes. Instead, we minimize the following empirical Bayesian PEHE risk with respect to $\hat{\textbf{f}}$ using stochastic gradient descent (SGD) with $L_2$ regularization:
\begin{equation}
\begin{split}
\hat{\mathcal{R}} &= \frac{1}{N} \sum_{i=1}^{N} \left( y_i^{\left(t_i\right)} - \mathbb{E} \left[ \hat{y}_i^{\left(t_i\right)}\big| x_i \right] \right)^2 \\
&+ \norm{\text{Var} \left[ \hat{y}_i^{\left(1-t_i\right)} \big| x_i \right]}_1, \label{eq:24}
\end{split}
\end{equation}
where $\hat{y}_i^{\left(t_i\right)}$ and $\hat{y}_i^{\left(1-t_i\right)}$ are the potential outcomes estimated by each estimator in CMDE as given in (\ref{eq:7}) and (\ref{eq:8}). The empirical mean and variance in (\ref{eq:24}) are computed over all the estimators in the ensemble.
It has been proved by \citet{alaa2017bayesian} that minimizing this risk is equivalent to minimizing the expectation of $\mathcal{\hat{L}}\left(\hat{\textbf{f}}; \boldsymbol{Y}^{(\boldsymbol{T})}, \boldsymbol{Y}^{(\boldsymbol{1-T})}\right)$ with respect to the posterior distribution of the counterfactual outcomes, which leads to a kernel that considers not only factual errors but also generalization to counterfactuals.

\section{Related Work}
\label{sec:4}
There exist several previous studies that focus on learning the potential outcomes with deep models or ensemble models, including Balancing Counterfactual Regression \cite{johansson2016learning}, the Counterfactual Regression Network (CFRNet \cite{shalit2017estimating}), and Bayesian Additive Regression Trees (BART) \cite{chipman2010bart}. These works generally attempt to learn a function $f: \mathcal{X} \times \{0,1\} \rightarrow \mathbb{R}$ which takes both the covariates and the treatment indicator as inputs. The treatment effect for an individual $x$ can thus be estimated as $\tau(x) \approx \hat{\tau}_f (x) = f(x,t=1) - f(x,t=0)$. However, the representation power of the treatment indicator $t$ can be significantly diluted when the dimension of the covariates $x$ becomes high, which can negatively affect the estimation of potential outcomes \cite{alaa2017bayesian,alaa2017deep}. Besides this line of work, \citet{alaa2017bayesian} proposed a multi-task GP framework that directly outputs the potential outcomes for both $t=0$ and $t=1$ by learning a multi-output function $\textbf{f}: \mathcal{X} \rightarrow \mathbb{R}^2$. The treatment effect in this case is estimated as $\tau(x) \approx \hat{\tau}_{\textbf{f}} (x) = \textbf{f}(x)^T \textbf{e}$ where $\textbf{e} = [-1, 1]^T$. Similar to CMGP, our deep ensemble model also learns a multi-output function while we replace the GPs with NNs to handle larger datasets and high-dimensional covariates, especially in cases where NNs outperform traditional GP approaches (e.g., images). 

\textcolor{black}{To the best of our knowledge, there is limited research on utilizing deep ensemble learning for causal inference in the existing literature. One notable study in this domain is the work by Hartford et al. \citeyearpar{hartford2021valid}, which focuses on treatment effect estimation using an ensemble of deep network-based instrumental variable estimators. Ensembles are commonly employed in machine learning and have been recognized for their effectiveness in reducing variance. In our approach, we adopt the ensemble method to approximate the full multi-task Gaussian process (GP) prior, whereby each baselearner can be viewed as a stochastic draw from this prior.}

It is also worth noting that CFRNet learns a \emph{balanced} representation such that the induced distributions for control and treatment groups look similar. Specifically, CFRNet consists of a network $\Phi$ which learns a representation from the covariates followed by two networks $h$ which learns hypotheses $h_0$ and $h_1$. This concept was employed in several subsequent studies on deep causal inference, including the Causal Effect Variational Autoencoder (CEVAE, \cite{louizos2017causal}), the Deep Counterfactual Network (DCN \cite{alaa2017deep}), and the Deep Orthogonal Networks (DONUT \cite{hatt2021estimating}). In contrast, our CMDE framework learns 3 representations which contains both group-specific and shared information from control and treatment groups. This provides more modeling flexibility, especially when the control and treatment groups are highly imbalanced.

As we explain in Section \ref{sec:3.1}, CMDE can be extended to handle multi-modal covariates. One recent study that focused on multi-modal causal inference is the Deep Multi-modal Structural Equations (DMSE) \cite{deshpandedeep}. A key distinction between DMSE and our method is that our causal graph does not include any latent variables. \textcolor{black}{While we acknowledge the potential benefits of latent-variable-based approaches in certain scenarios, their performance heavily relies on the complexity and accurate specification of the latent variable distribution, as emphasized by Rissanen and Marttinen \citeyearpar{rissanen2021critical}.}

The equivalence between NNs and GPs is also highly relevant to our method. Neal first proved that single-hidden-layer NNs become GPs as the width of the hidden layer goes to infinity \cite{neal1996priors,neal2012bayesian}. This proof was then extended to deep neural networks (DNNs) by Lee et al. \citeyearpar{lee2017deep} who designed an efficient implementation to calculate the NN-implied kernel and Matthews et al. \citeyearpar{matthews2018gaussian} who empirically evaluated the convergence rate via maximum mean discrepancy (MMD). Garriga-Alonso et al. \citeyearpar{garriga2018deep} also proved that convolutional neural networks (CNNs) are GPs in the limit of infinite number of channels. These findings allow us to simulate GP behavior using the outputs from NNs.

\begin{figure*}[t!]
\centering
\begin{subfigure}[t]{0.315\textwidth}
    \includegraphics[width=\linewidth]{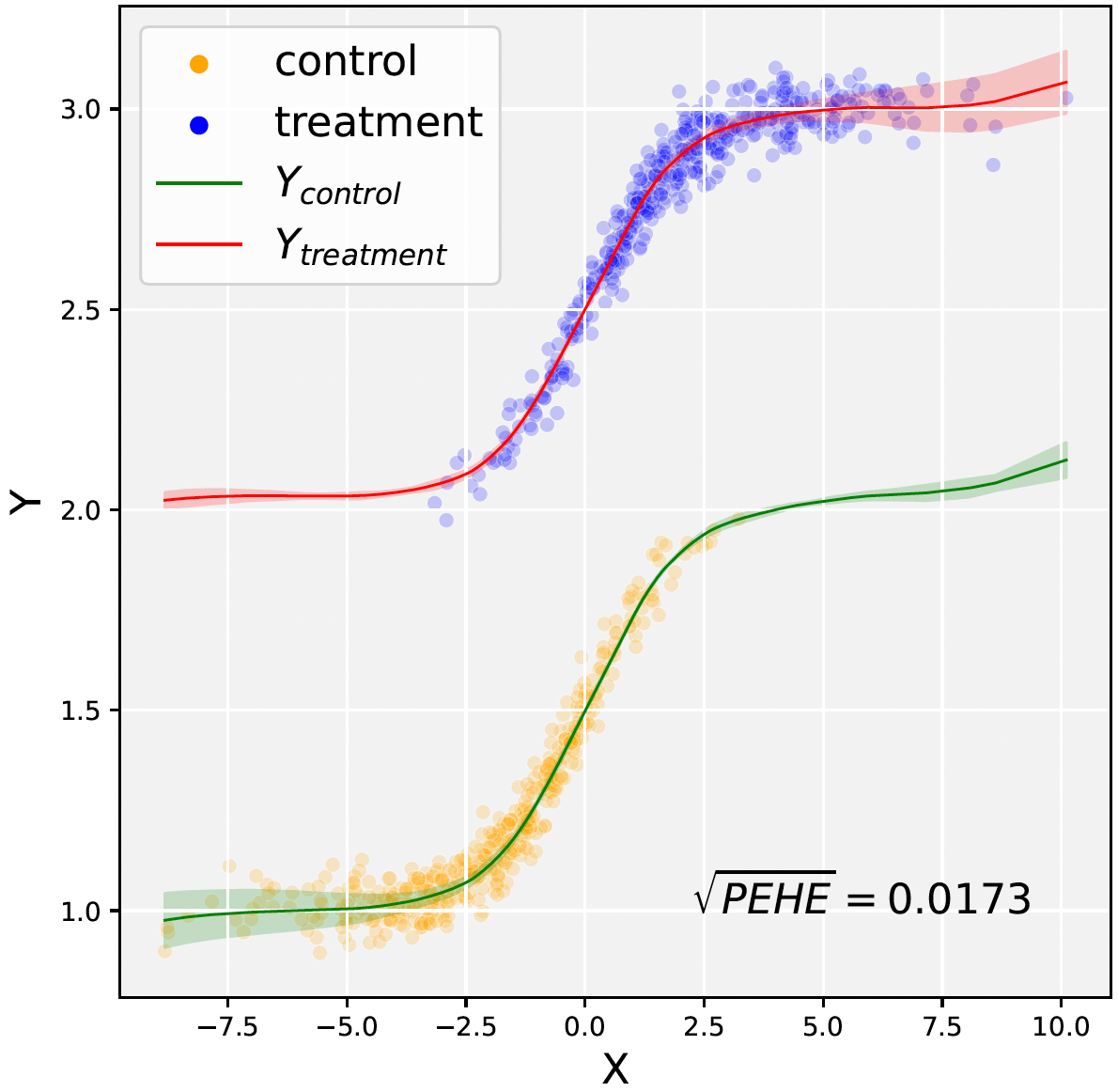}
    \label{fig:2a}
\end{subfigure}
\begin{subfigure}[t]{0.315\textwidth}
    \includegraphics[width=\linewidth]{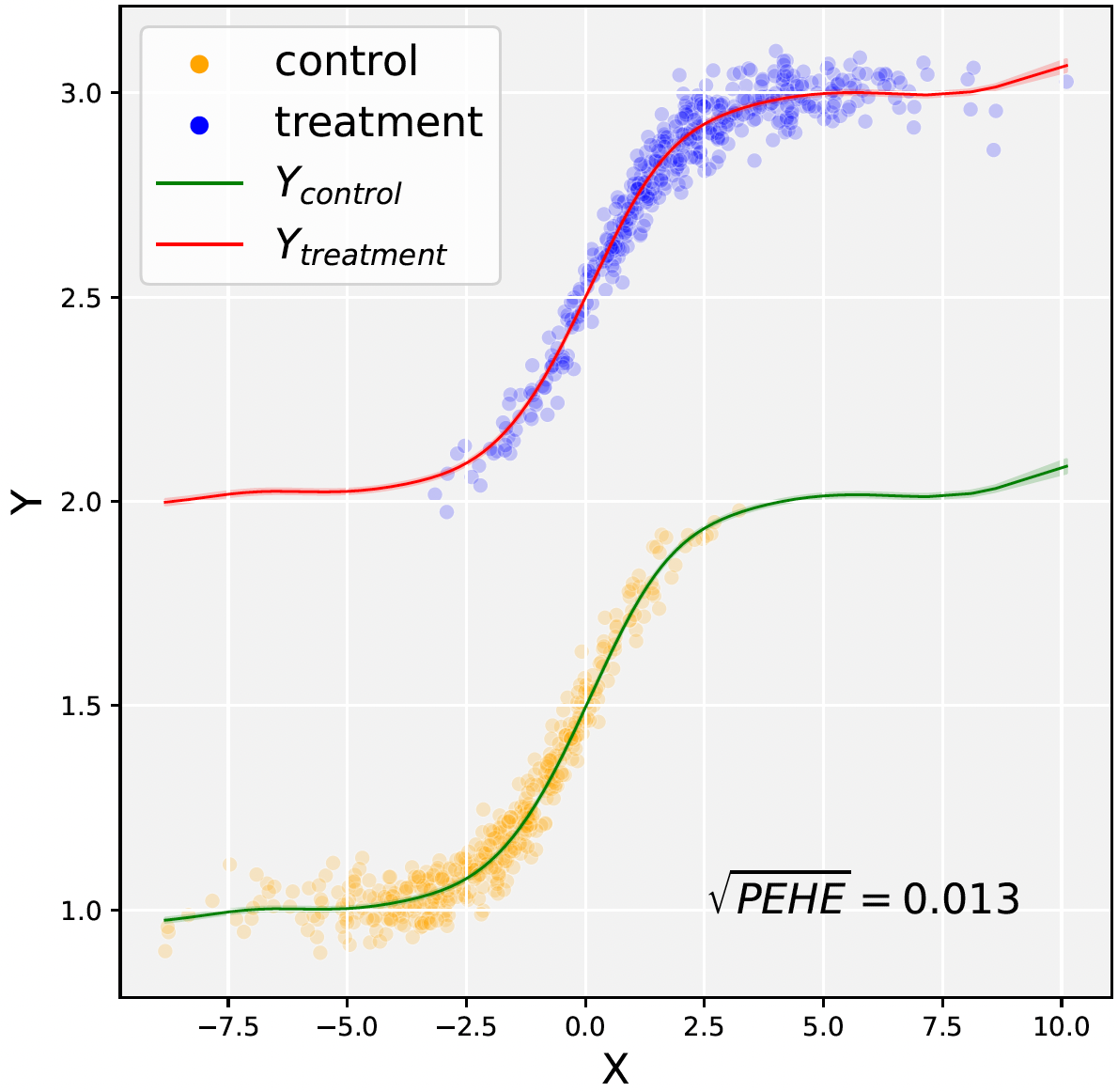}
    \label{fig:2b}
\end{subfigure}
\begin{subfigure}[t]{0.35\textwidth}
    \includegraphics[width=\linewidth]{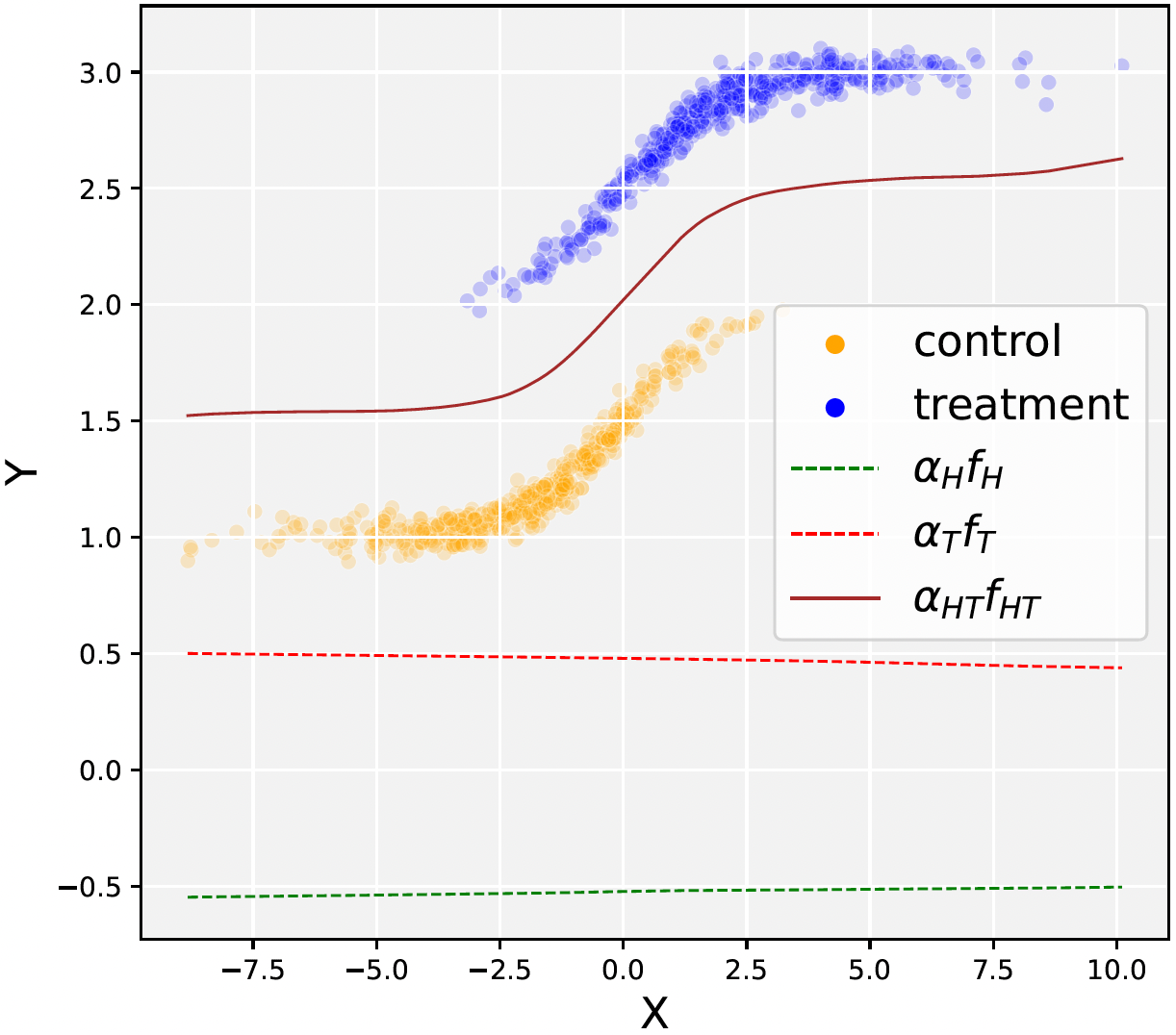}
    \label{fig:2c}
\end{subfigure}
\vspace{-3mm}
\caption{Predictions for the control and treatment groups on the synthetic dataset by CMDE (left) and multi-task GP with ICM kernel (middle) where dots represent observed samples, lines represent mean predictions, and shaded regions represent predicted values within 2 standard deviations. In addition, we also plot the contribution of group-specific and shared components for CMDE (right). It can be observed that $f_{HT}$ learns the shared features between the control and treatment groups and $f_H$ and $f_T$ learns the group-specific features.}
\label{fig:2}
\end{figure*}

\section{Experimental Results}
\label{sec:5}
We conduct experiments on a total of 6 datasets: one purely synthetic dataset, 3 benchmark datasets, and 2 datasets with multiple input modalities\footnotemark[1]. The detailed experimental setup is given in Appendix \ref{appx:D}.
\footnotetext[1]{The code to replicate all experiments is available at: \url{https://github.com/jzy95310/ICK/tree/main/experiments/causal_inference}}

\subsection{Synthetic Dataset}
\label{sec:5.1}
To empirically show the convergence of CMDE to its GP counterpart as the width of NNs goes to infinity, we first test CMDE on a synthetic dataset (see detailed data generation process in Appendix \ref{appx:D.1}) and compare it to a causal multi-task GP (CMGP) \cite{alaa2017bayesian} with an ICM kernel, 
\begin{align*}
&\textbf{K}_{\text{ICM}}(x,x') = k_{\text{NN}}(x,x') \textbf{B}, \\
&k_{\text{NN}}(x,x') = \textstyle \frac{2}{\pi} \sin^{-1} \left( \frac{2\tilde{x}^T \Sigma \tilde{x}'}{\sqrt{(1+2\tilde{x}^T \Sigma \tilde{x})(1+2\tilde{x}^{'^T} \Sigma \tilde{x}')}} \right),
\end{align*}
where $\tilde{x} = [1, x_1, x_2, ..., x_D]$ (e.g., a constant concatenated to the feature vector) and $\Sigma \in \mathbb{R}^{D \times D}$ is a pre-defined parameter representing the covariance of the weights in a single-hidden-layer NN. To evaluate how well we estimate the treatment effect, we use the PEHE metric 
\begin{equation}
\begin{split}
\textstyle
\epsilon_{\text{PEHE}} =
\frac{1}{N} \sum_{i=1}^N \left( \mathbb{E}_{y_i^{(0)}, y_i^{(1)} \sim \mathcal{Y}} \left[y_i^{(1)} - y_i^{(0)}\right]\right. \\-\left. \left(\hat{y}_i^{(1)} - \hat{y}_i^{(0)}\right) \right)^2,
\end{split}
\label{eq:25}
\end{equation}
where $y^{(0)}, y^{(1)}$ are true outcomes and $\hat{y}_i^{(0)}, \hat{y}_i^{(1)}$ are predicted outcomes. As shown in Figure \ref{fig:2}, the two methods yield very similar mean predictions and PEHE values except that CMDE tends to extrapolate with less confidence (i.e. higher standard deviation) where there exist fewer observed samples. We attribute this effect to only using 10 estimators for CMDE. We also plot the group-specific and shared components $\alpha_H f_H$, $\alpha_T f_T$, and $\alpha_{HT}f_{HT}$ in CMDE, which reveals that $f_{HT}$ learns the overall shape shared by the two response surfaces while $f_H$ and $f_T$ learn the magnitude of difference between the two surfaces.

\begin{figure}[t!]
\centering
\includegraphics[width=0.95\linewidth]{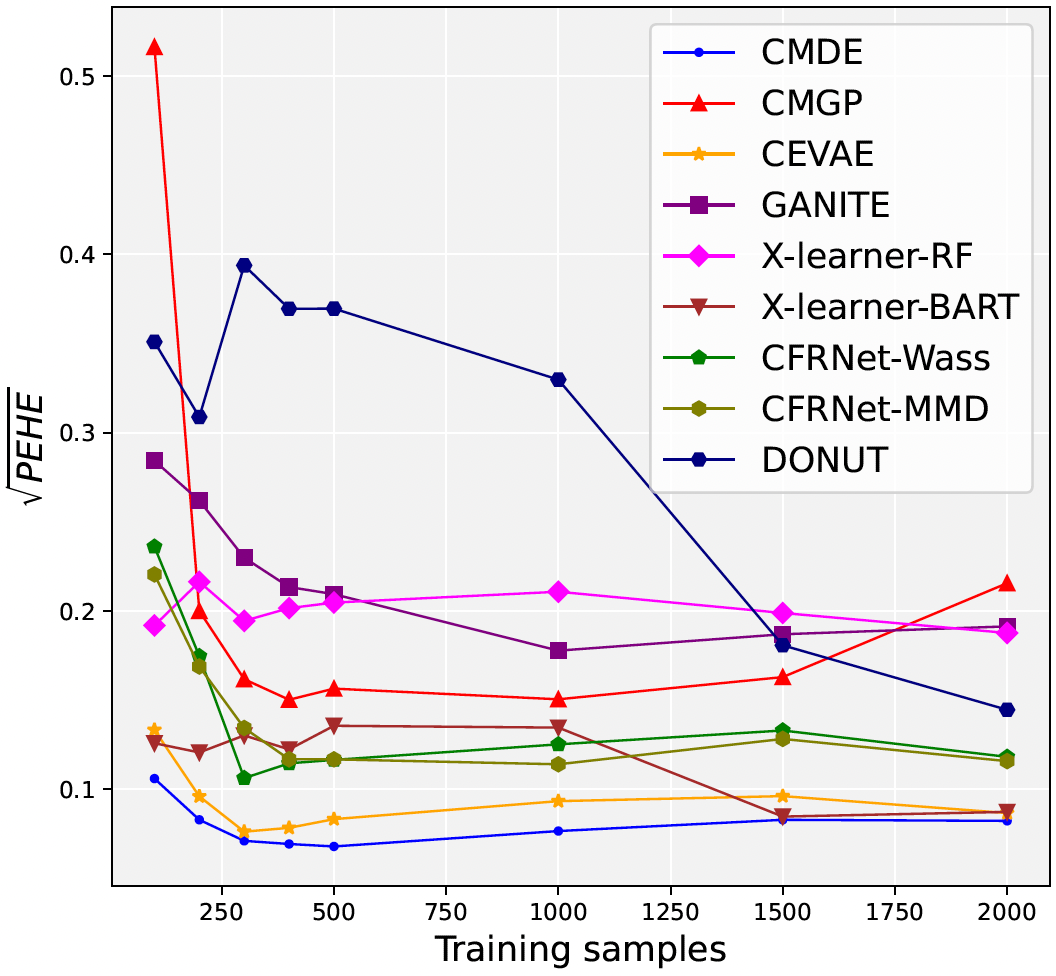}
\caption{Performance of CATE estimation $\left(\sqrt{\epsilon_{\text{PEHE}}}\right)$ on a dataset acquired from the Atlantic Causal Inference Conference held in 2019 (ACIC2019). Lower $\sqrt{\epsilon_{\text{PEHE}}}$ is better. Most of the methods exhibit saturated performance with $>500$ training samples.}
\label{fig:3}
\vspace{-3mm}
\end{figure}

\begin{table}[t!]
\setlength{\tabcolsep}{0.45em}
\centering
\begin{tabular}{l|cc|cc} 
\hline
~                                                        & \multicolumn{2}{c|}{Twins: $\sqrt{\hat{\epsilon}_{\text{PEHE}}}$}                                                                  & \multicolumn{2}{c}{Jobs: $\mathcal{R}_{\text{pol}}(\pi)$}                                                                    \\
~                                                        & \begin{tabular}[c]{@{}c@{}}In-\\sample\end{tabular} & \begin{tabular}[c]{@{}c@{}}Out-of-\\sample\end{tabular} & \begin{tabular}[c]{@{}c@{}}In-\\sample\end{tabular} & \begin{tabular}[c]{@{}c@{}}Out-of-\\sample\end{tabular}  \\ 
\hline
CMDE                                                     & .32 $\pm$ .00                                                  & \textbf{.32 $\pm$ .01}                                                   & \textbf{.05 $\pm$ .01}                                        & \textbf{.26 $\pm$ .02}                                          \\
CMGP                                                     & .44 $\pm$ .00                                                  & .44 $\pm$ .01                                                   & .12 $\pm$ .02                                        & .30 $\pm$ .02                                          \\
CEVAE                                                    & .32 $\pm$ .00                                                  & \textbf{.32 $\pm$ .01}                                                   & .11 $\pm$ .03                                        & .29 $\pm$ .03                                          \\
GANITE                                                   & .33 $\pm$ .00                                                  & .33 $\pm$ .01                                                   & .10 $\pm$ .02                                        & .30 $\pm$ .01                                          \\
X-RF   & \textbf{.30 $\pm$ .00}                                                  & .33 $\pm$ .01                                                   & N/A                                                & N/A                                                  \\
X-BART & .32 $\pm$ .00                                                  & \textbf{.32 $\pm$ .01}                                                   & N/A                                                & N/A                                                  \\
\begin{tabular}[c]{@{}l@{}}CFR-\\Wass\end{tabular} & .32 $\pm$ .00                                                  & \textbf{.32 $\pm$ .01}                                                   & .09 $\pm$ .03                                        & .28 $\pm$ .02                                          \\
\begin{tabular}[c]{@{}l@{}}CFR-\\MMD\end{tabular} & .32 $\pm$ .00                                                  & \textbf{.32 $\pm$ .01}                                                   & .08 $\pm$ .04                                        & .28 $\pm$ .03                                          \\
DONUT                                                    & .32 $\pm$ .00                                                  & \textbf{.32 $\pm$ .01}                                                   & .09 $\pm$ .05                                        & .27 $\pm$ .03                                          \\
\hline
\end{tabular}
\caption{Performance of CATE estimation on the Twins (left) and the Jobs (right) datasets for both in-sample and out-of-sample settings. Lower $\sqrt{\hat{\epsilon}_{\text{PEHE}}}$ or $\mathcal{R}_{\text{pol}}(\pi)$ is better. For Jobs, we do not report the results of X-learner as it directly estimates the individual treatment effect (ITE) instead of $y^{(t)}$.}
\label{tab:1}
\end{table}

\begin{figure*}[t!]
\begin{minipage}[c]{0.67\textwidth}
    \includegraphics[width=\textwidth]{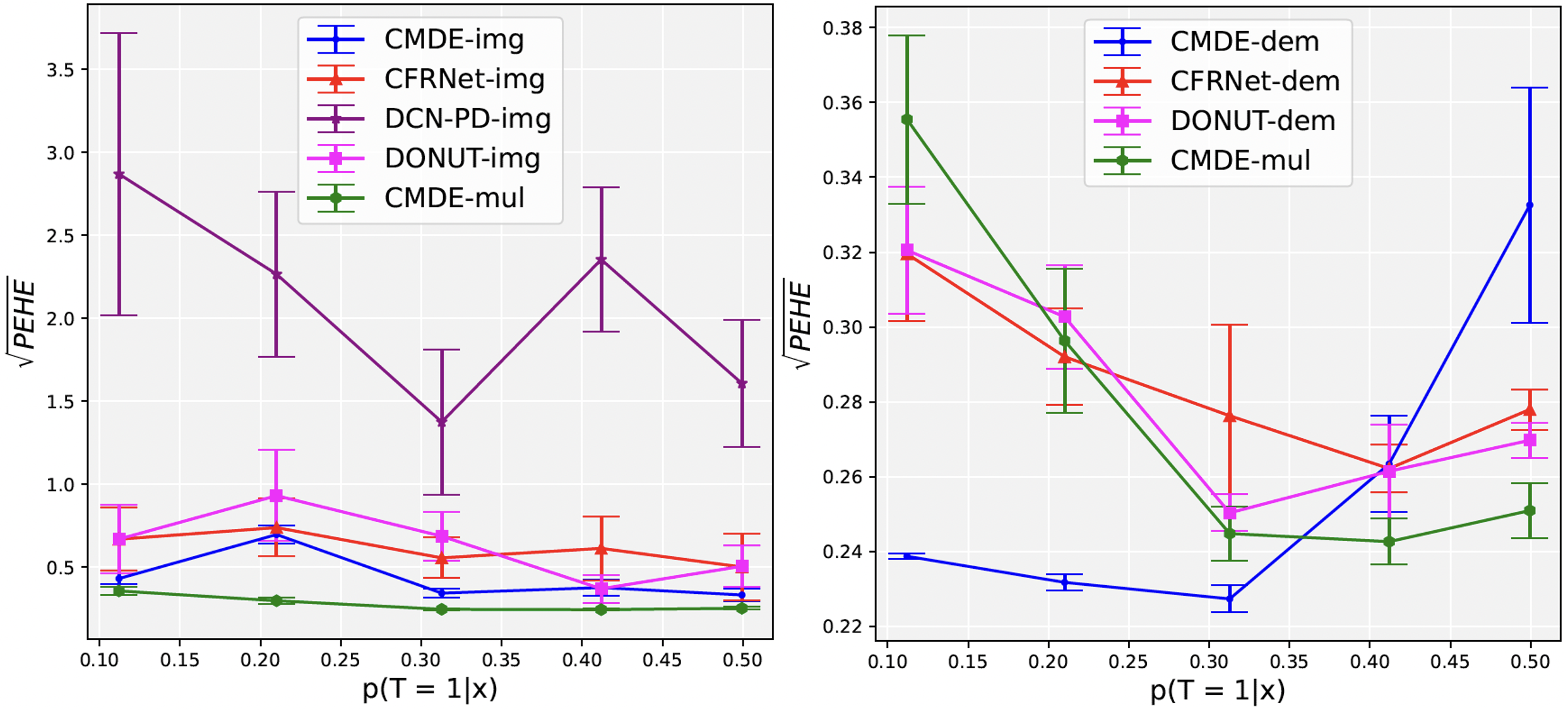}
\end{minipage}\hfill
\begin{minipage}[c]{0.30\textwidth}
    \caption{Results of CATE estimation $\left(\sqrt{\epsilon_{\text{PEHE}}}\right)$ on the semi-synthetic COVID-19 dataset with different propensities where the covariates are either X-ray images (left) or demographic information (right). CMDE with multi-modal covariates (both images and demographic information) are marked as CMDE-mul in both figures. The lines and error bars represent mean and half of the standard deviation of $\sqrt{\epsilon_{\text{PEHE}}}$, respectively. The error of DCN-PD with demographic information is too high so we do not show it in the right figure for better visualization.}
    \label{fig:4}
\end{minipage}
\end{figure*}

\subsection{Benchmark Datasets}
\label{sec:5.2}
We then evaluate CMDE on 3 frequently used benchmark datasets in the existing causal inference literature: a dataset acquired from the Atlantic Causal Inference Conference held in 2019 (ACIC2019 \cite{dorie2019automated}), the Twins dataset containing twins birth in the United States from 1989 to 1991 \cite{almond2005costs}, and the Jobs dataset studied by LaLonde \citeyearpar{lalonde1986evaluating} which is composed of randomized data based on state-supported work programs and non-randomized data from observational studies (see data pre-processing details in Appendix \ref{appx:D.2}). For evaluation metrics, we use $\epsilon_{\text{PEHE}}$ as given in (\ref{eq:25}) for ACIC2019 since we know the true expected values of $Y^{(0)}$ and $Y^{(1)}$. For the Twins dataset, since we observe both the factual and counterfactual outcomes (i.e. $y_i^{(0)}$ and $y_i^{(1)}$) on the paired data but do not know the underlying distribution $\mathcal{Y}$, we use the following empirical PEHE:
\begin{equation}
\textstyle \hat{\epsilon}_{\text{PEHE}} = \frac{1}{N} \sum_{i=1}^N \left( \left( y_i^{(1)}-y_i^{(0)} \right) - \left( \hat{y}_i^{(1)}-\hat{y}_i^{(0)} \right) \right)^2.
\end{equation}
For the Jobs dataset, only the factual outcomes are observed, so we use a metric called \emph{policy risk},
\begin{equation}
\textstyle \mathcal{R}_{\text{pol}}\left(\pi_f\right) = 1 - \frac{\sum_{i=1}^N y_i^{(t_i)} \mathbbm{1}[\pi_f(x_i)=t_i]}{\sum_{i=1}^N \mathbbm{1}[\pi_f(x_i)=t_i]},
\end{equation}
where we let the policy $\pi_f$ of a model $f$ to be $\pi_f(x) = 1$ if $f(x,t=1) > f(x,t=0)$ and $\pi_f(x) = 0$ otherwise. We compare CMDE with a total of 8 benchmark models: multi-task GP (CMGP \cite{alaa2017bayesian}), CEVAE \cite{louizos2017causal}, Generative Adversarial Nets (GANITE \cite{yoon2018ganite}), X-learner \cite{kunzel2019metalearners} of which the base learners are random forests (RF) and BART \cite{chipman2010bart}, Counterfactual Regression Network (CFRNet \cite{shalit2017estimating}) with 2-Wasserstein distance and Maximum Mean Discrepancy (MMD), and Deep Orthogonal Networks (DONUT \cite{hatt2021estimating}). For the ACIC2019 dataset, we vary the training set size to compare  algorithms in Figure \ref{fig:3}, and observe that CMDE gives the lowest error ($\epsilon_{\text{PEHE}}$) on the treatment effect. Furthermore, as shown in Table \ref{tab:1}, CMDE demonstrates competitive performance compared to other benchmark models in terms of PEHE on the Twins dataset, and outperforms all other benchmark models in terms of policy risk on Jobs.

\subsection{Datasets with Multi-modal Covariates}
\label{sec:5.3}
To demonstrate CMDE's strength in terms of handling high-dimensional and multi-modal covariates as described in Section \ref{sec:3.1}, we further adopt 2 datasets: a semi-synthetic COVID-19 dataset built upon a collection of patients' chest X-ray images and their corresponding demographic information and diagnosis (e.g., COVID-19 or other viral pneumonia, bacterial pneumonia, fungal pneumonia, etc.) \cite{cohen2020covid} and a real-world dataset from the Student-Teacher Achievement Ratio (STAR) experiment \cite{word1990student} with some features replaced by images with corresponding characteristics from the UTK dataset \cite{zhang2017age}. The details such as dataset pre-processing and model architectures can be found in Appendix \ref{appx:D.3}. 

We first conduct experiments on the semi-synthetic COVID-19 dataset under different propensity score settings. The results of CATE estimation for CMDE and benchmark deep causal models (i.e. CFRNet \cite{shalit2017estimating}, Deep Counterfactual Network with Propensity Dropout (DCN-PD \cite{alaa2017deep}), and DONUT \cite{hatt2021estimating}) are shown in Figure \ref{fig:4}. It can be observed that, with multi-modal covariates (i.e. both X-ray images and demographic information), CMDE-mul achieves the lowest PEHE compared to other benchmarks with only the X-ray images as covariates (or model inputs). In addition, CMDE-mul demonstrates superior performance compared to all other benchmarks with demographic information as covariates when the propensity score (i.e. $P(T = 1|x)$) is close to 0.5. However, when the propensity score ranges from 0.1 to 0.3, CMDE with only demographic information yields better results. \textcolor{black}{In other words, CMDE seems to benefit from using simpler covariate information when the control and treatment groups are relatively imbalanced. We attribute this phenomenon to a bias-variance tradeoff. Specifically, we note that the estimation variance tends to be larger when using simpler covariates compared to more complex ones. Conversely, incorporating additional information (e.g., bias reduction) through the use of more comprehensive covariates can lead to more accurate predictions. In situations where there is insufficient overlap between the control and treatment groups, the overall variance will increase. However, the increase in variance will be more substantial in cases involving complex covariates. Therefore, in scenarios with limited overlap, it may be advantageous to reduce the set of covariates, as the full estimation process may be dominated by error stemming from the variance term.}

We also compare CMDE with the same 3 benchmark models on another real-world dataset from the STAR experiment which studied the effect of class size on the students' performance and test scores. Since the original dataset corresponds to a randomized control trial and the true average treatment effect (ATE) can be estimated directly, we use the following ATE error as our evaluation metric in this experiment:
\begin{equation}
\epsilon_{\text{ATE}} = \bigg| \text{ATE}_{\text{true}} - \frac{1}{N} \sum_{i=1}^{N} \mathbb{E}\left[ \hat{y}_i^{(1)}|x_i \right] - \mathbb{E}\left[ \hat{y}_i^{(0)}|x_i \right] \bigg|.
\end{equation}
The results of CATE estimation are visualized as bar plots as shown in Figure \ref{fig:5}. We can see that CMDE outperforms other benchmark models with only the images or the students' information as covariates. Furthermore, with multi-modal covariates (i.e., both images and students' information), CMDE-mul yields the smallest ATE error with the lowest predictive uncertainty.

\begin{figure*}[t!]
\begin{minipage}[c]{0.61\textwidth}
    \includegraphics[width=\textwidth]{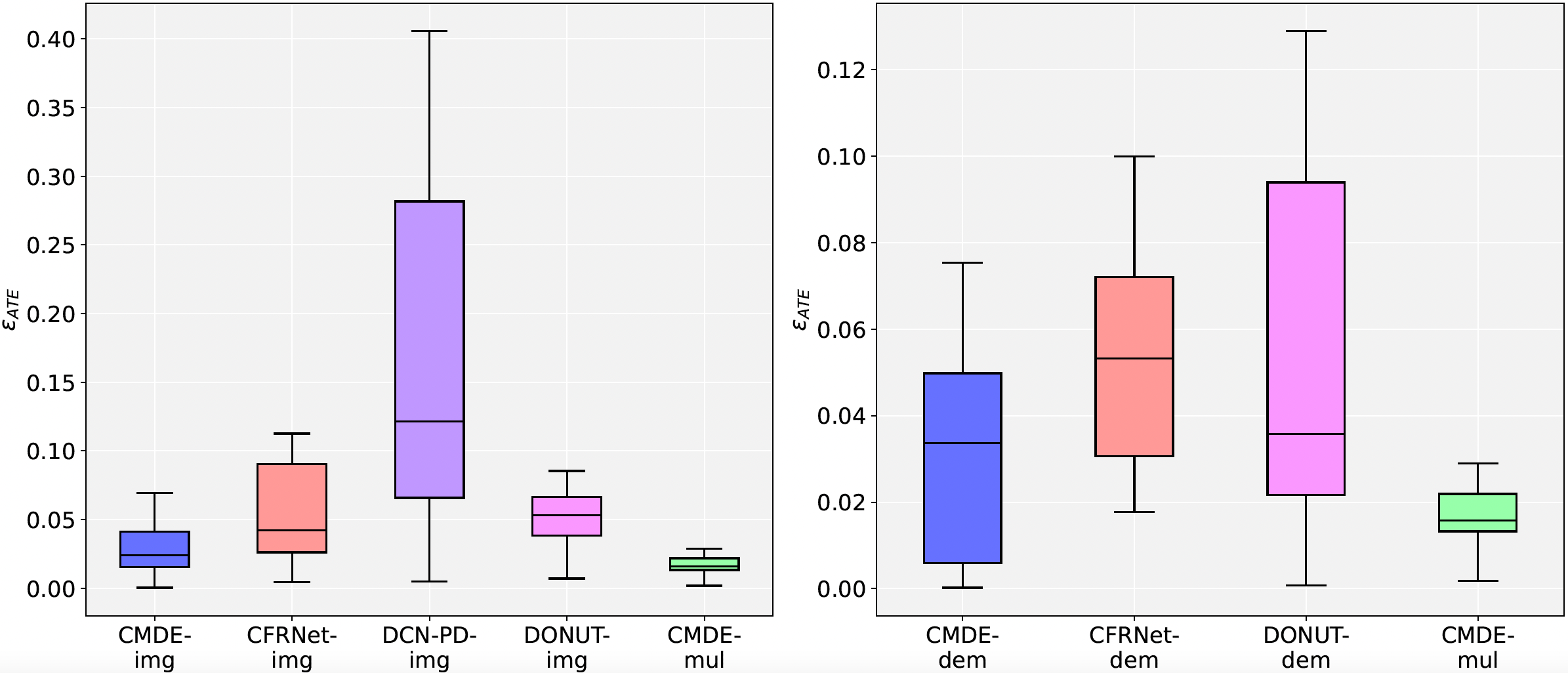}
\end{minipage}\hfill
\begin{minipage}[c]{0.35\textwidth}
    \caption{Box plots of CATE estimation $\left(\epsilon_{\text{ATE}}\right)$ on the STAR dataset where the covariates are either images (left) or the students' information (right). CMDE with multi-modal covariates (both images and students' information) are marked as CMDE-mul on the $x$-axis in both figures. The boxes extend from the $1^{\text{st}}$ quantile to the $3^{\text{rd}}$ quantile of $\epsilon_{\text{ATE}}$ with a line at the median. We do not show DCN-PD-dem on the right for cleaner visualization.}
    \label{fig:5}
\end{minipage}
\end{figure*}

\section{Discussion}
\label{sec:6}
\paragraph{NN architectures in CMDE} As we show in Section \ref{sec:3}, $f_H$, $f_T$, and $f_{HT}$ need to have the same depth and initialization strategy to guarantee CMDE's convergence to a multi-task GP with ICM kernel. While this requirement is essential for theoretical considerations, it does not pose a significant practical limitation. In practice, if the distributions of control and treatment groups exhibit significant difference, it is recommended either to use LMC kernel or to employ different NN architectures for each function.

\paragraph{Limitations}
While CMDE has shown excellent results in our experiments, we identify some potential limitations to be addressed in future work. For example, the hyperparameters (e.g., width, depth, initial parameter values, etc.) of NNs in CMDE can be hard to tune for specific tasks. We also find the performance of CMDE, in some cases, is sensitive to the initial values of the coefficients applied to NNs (e.g., $\alpha_H$, $\alpha_T$, and $\alpha_{HT}$), although we set these coefficients to be trainable. This requires us to have some prior knowledge about which type of information, group-specific or shared, is more dominant in specific datasets. \textcolor{black}{A detailed discussion is given in Appendix \ref{appx:E}.}

\paragraph{Applications}
We believe CMDE is applicable to a variety of real-world causal inference scenarios involving high-dimensional and multi-modal covariates, such as using medical records and images to estimate treatment effects in observational studies, or using A/B testing to determine the efficacy of a new version of user interface.

\paragraph{Societal Impact} Currently we are not aware of any new potential negative societal impacts of our work; however, like all machine learning methods that could be applied in the wild, the societal impact will depend on the task at hand. For example, the STAR dataset uses pictures of individuals to estimate causal effects; image processing can encode unwanted biases and checks should be in place before the deployment of any such system.

\section{Conclusion}
\label{sec:7}
We present a framework for estimating the causal effect of a treatment using a multi-task deep ensemble which learns both group-specific and shared information from control and treatment groups using separate neural networks. Theoretically, we demonstrate that our framework converges to a multi-task GP with an ICM/LMC kernel. We also provide empirical evidence of this relationship and visualize the contribution of each neural network components in our framework. Experimental results on various types of datasets demonstrate superior performance of CMDE compared to state-of-the-art approaches.

\section*{Acknowledgments}
Research reported in this publication was supported by the National Institute of Biomedical Imaging and Bioengineering of the National Institutes of Health and the the National Institute of Mental Health under Award Number R01EB026937. The content is solely the responsibility of the authors and does not necessarily represent the official views of the National Institutes of Health.

\bibliography{references}
\bibliographystyle{icml2023}

\newpage
\appendix
\onecolumn

\section{Proof of Convergence to LMC Kernel}
\renewcommand\thefigure{\thesection\arabic{figure}}
\setcounter{figure}{0}
\label{appx:A}
As stated in Section \ref{sec:3}, by constructing $\hat{\textbf{f}}$ as below:
\begin{align}
&\hat{f}_0(X) \coloneqq \sum_{q=1}^Q \alpha_H^q f_H^q(X) + \alpha_{HT}^q f_{HT}^q(X), \\
&\hat{f}_1(X) \coloneqq \sum_{q=1}^Q \alpha_{HT}^q f_{HT}^q(X) + \alpha_T^q f_T^q(X),
\end{align}
where $f_H^q$, $f_T^q$, and $f_{HT}^q$ share the same depth and initialization strategy for the same value of $q$, we can again separately compute each term in $\text{cov} \left( \hat{\textbf{f}}(x), \hat{\textbf{f}}(x') \right)$ as follows (again, note that the expectations are taken with respect to the parameters of the functions inside the expectation):
\begin{align}
\mathbb{E} [\hat{f}_0(x)\hat{f}_0(x')] &= \mathbb{E} \left[ \left( \sum_{q=1}^Q \alpha_H^q f_H^q(x) + \alpha_{HT}^q f_{HT}^q(x) \right) \left( \sum_{q=1}^Q \alpha_H^q f_H^q(x') + \alpha_{HT}^q f_{HT}^q(x') \right) \right] \nonumber \\
&= \sum_{q=1}^{Q} \left( \alpha_H^q \right)^2 \mathbb{E}\left[f_H^q(x) f_H^q(x')\right] + \left( \alpha_{HT}^q \right)^2 \mathbb{E}\left[f_{HT}^q(x) f_{HT}^q(x')\right], \label{eq:27} \\
\mathbb{E}[\hat{f}_0(x)\hat{f}_1(x')] &= \mathbb{E} \left[ \left( \sum_{q=1}^Q \alpha_H^q f_H^q(x) + \alpha_{HT}^q f_{HT}^q(x) \right) \left( \sum_{q=1}^Q \alpha_{HT}^q f_{HT}^q(x') + \alpha_T^q f_T^q(x') \right) \right] \nonumber \\
&= \sum_{q=1}^Q \left(\alpha_{HT}^q\right)^2 \mathbb{E} \left[ f_{HT}^q(x) f_{HT}^q(x') \right], \label{eq:28} \\
\mathbb{E} [\hat{f}_1(x)\hat{f}_0(x')] &= \mathbb{E} \left[ \left( \sum_{q=1}^Q \alpha_{HT}^q f_{HT}^q(x) + \alpha_T^q f_T^q(x) \right) \left( \sum_{q=1}^Q \alpha_H^q f_H^q(x') + \alpha_{HT}^q f_{HT}^q(x') \right) \right] \nonumber \\
&= \sum_{q=1}^Q \left(\alpha_{HT}^q\right)^2 \mathbb{E} \left[ f_{HT}^q(x) f_{HT}^q(x') \right], \label{eq:29} \\
\mathbb{E} [\hat{f}_1(x)\hat{f}_1(x')] &= \mathbb{E} \left[ \left( \sum_{q=1}^Q \alpha_{HT}^q f_{HT}^q(x) + \alpha_T^q f_T^q(x) \right) \left( \sum_{q=1}^Q \alpha_{HT}^q f_{HT}^q(x') + \alpha_T^q f_T^q(x') \right) \right] \nonumber \\
&= \sum_{q=1}^Q \left( \alpha_{HT}^q \right)^2 \mathbb{E}\left[f_{HT}^q(x) f_{HT}^q(x')\right] + \left( \alpha_T^q \right)^2 \mathbb{E} \left[ f_T^q(x) f_T^q(x') \right]. \label{eq:30}
\end{align}
For (\ref{eq:27}) to (\ref{eq:30}), we get rid of the cross terms based on the fact that the parameters of different neural networks all have zero mean and are independent of each other. Note that $f_H^q$, $f_T^q$, and $f_{HT}^q$ share the same depth and initialization strategy as stated in Theorem \ref{thm:3.1} for the same value of $q$, indicating that $\mathbb{E}[f_H^q(x)f_H^q(x')] = \mathbb{E}[f_T^q(x)f_T^q(x')] = \mathbb{E}[f_{HT}^q(x)f_{HT}^q(x')] = k_{\text{NN}}^q(x,x')$ for $q = 1, 2, ..., Q$ in the infinite width limit \emph{a priori}. Therefore, we have:
\begin{align*}
\mathbb{E} [\hat{f}_0(x)\hat{f}_0(x')] &= \sum_{q=1}^Q \left( \left( \alpha_H^q \right)^2 + \left( \alpha_{HT}^q \right)^2 \right) k_{\text{NN}}^q(x,x'), \\
\mathbb{E} [\hat{f}_0(x)\hat{f}_1(x')] &= \mathbb{E} [\hat{f}_1(x)\hat{f}_0(x')] = \sum_{q=1}^Q \left(\alpha_{HT}^q\right)^2 k_{\text{NN}}^q(x,x'), \\
\mathbb{E} [\hat{f}_1(x)\hat{f}_1(x')] &= \sum_{q=1}^Q \left( \left( \alpha_{HT}^q \right)^2 + \left( \alpha_T^q \right)^2 \right) k_{\text{NN}}^q(x,x').
\end{align*}
Substituting the expressions above back into $\text{cov} \left( \hat{\textbf{f}}(x), \hat{\textbf{f}}(x') \right)$ as given in (\ref{eq:13}), we get:
\begin{equation}
\text{cov} \left( \hat{\textbf{f}}(x), \hat{\textbf{f}}(x') \right) = \sum_{q=1}^Q k_{\text{NN}}^q(x,x') \textbf{B}_q \quad \text{where} \quad \textbf{B}_q = 
\begin{bmatrix}
\left( \alpha_H^q \right)^2 + \left( \alpha_{HT}^q \right)^2 & \left(\alpha_{HT}^q\right)^2 \\
\left(\alpha_{HT}^q\right)^2 & \left( \alpha_{HT}^q \right)^2 + \left( \alpha_T^q \right)^2
\end{bmatrix}.
\end{equation}
This proves that $\hat{\textbf{f}}$ will converge in distribution to a GP with zero mean and LMC kernel in the infinite width limit \emph{a priori}.

\section{Proof of Convergence to ICM Kernel for the Multiple-Treatment Case}
\renewcommand\thefigure{\thesection\arabic{figure}}
\setcounter{figure}{0}
\label{appx:B}
As elaborated in Section \ref{sec:3}, for multiple-treatment case $T \in \{1, ..., C\}$, we can construct $\hat{\textbf{f}} = \left[ \hat{f}_{1}, ..., \hat{f}_{C} \right]^T$ as follows:
\begin{equation}
\hat{f}_c(X) \coloneqq \sum_{d=1}^{c-1} \alpha_{dc}f_{dc}(X) + \alpha_c f_c(X) + \sum_{d=c+1}^{C} \alpha_{cd}f_{cd}(X) \; \forall \; c = 1, ..., C-1,
\end{equation}
where $f_c$ learns the group-specific information and $f_{dc}, f_{cd}$ learn the shared information. Similar to Appendix \ref{appx:A}, we can calculate each separate term in $\text{cov} \left( \hat{\textbf{f}}(x), \hat{\textbf{f}}(x') \right)$ (again, note that the expectations are taken with respect to the parameters
of the functions inside the expectation). For diagonal terms, we have:
\begin{align}
&\mathbb{E}[\hat{f}_c(x)\hat{f}_c(x')] \nonumber \\ 
&= \mathbb{E} \left[ \left( \sum_{d=1}^{c-1} \alpha_{dc}f_{dc}(x) + \alpha_c f_c(x) + \sum_{d=c+1}^{C} \alpha_{cd}f_{cd}(x) \right) \left( \sum_{d=1}^{c-1} \alpha_{dc}f_{dc}(x') + \alpha_c f_c(x') + \sum_{d=c+1}^{C} \alpha_{cd}f_{cd}(x') \right) \right] \nonumber \\
&= \sum_{d=1}^{c-1} \alpha_{dc}^2 \mathbb{E} \left[ f_{dc}(x)f_{dc}(x') \right] + \alpha_c^2 \mathbb{E}[f_c(x)f_c(x')] + \sum_{d=c+1}^{C} \alpha_{cd}^2 \mathbb{E} \left[ f_{cd}(x)f_{cd}(x') \right].
\end{align}
For off-diagonal terms, we have:
\begin{align}
&\mathbb{E} [\hat{f}_c(x)\hat{f}_{c'}(x')] \nonumber \\ 
&= \mathbb{E} \left[ \left( \sum_{d=1}^{c-1} \alpha_{dc}f_{dc}(x) + \alpha_c f_c(x) + \sum_{d=c+1}^{C} \alpha_{cd}f_{cd}(x) \right) \left( \sum_{d=1}^{c'-1} \alpha_{dc'}f_{dc'}(x') + \alpha_c' f_{c'}(x') + \sum_{d=c'+1}^{C} \alpha_{c'd}f_{c'd}(x') \right) \right] \nonumber \\
&= \alpha_{cc'}^2 \mathbb{E}[f_{cc'}(x)f_{cc'}(x')],
\end{align}
where $c < c'$. For $c > c'$, we have $\alpha_{cc'} = \alpha_{c'c}$ (i.e. the covariance matrix is symmetric). Here we again get rid of the cross terms based on the fact that the parameters of different neural networks all
have zero mean and are independent of each other. If all neural networks in this formulation (i.e. a total of $C(C+1)/2$ networks) share the same depth and initialization strategy as stated in Theorem \ref{thm:3.1}, indicating that $\mathbb{E}[f_c(x)f_c(x')] = k_{\text{NN}}(x,x') \; \forall \; c = 1, ..., C$ and $\mathbb{E}[f_{cc'}(x)f_{cc'}(x')] = k_{\text{NN}}(x,x') \; \forall \; c < c'$ and $c' = 1, ..., C$ in the infinite width limit \emph{a priori}, then we can further write the diagonal and off-diagonal terms in $\text{cov} \left( \hat{\textbf{f}}(x), \hat{\textbf{f}}(x') \right)$ as:
\begin{align*}
\mathbb{E}[\hat{f}_c(x)\hat{f}_c(x')] &= \left( \alpha_c^2 + \sum_{d=1}^{c-1} \alpha_{dc}^2 + \sum_{d=c+1}^{C} \alpha_{cd}^2 \right) k_{\text{NN}}(x,x'), \\
\mathbb{E} [\hat{f}_c(x)\hat{f}_{c'}(x')] &= \alpha_{cc'}^2 k_{\text{NN}}(x,x').
\end{align*}
With this, we derive:
\begin{equation}
\text{cov} \left( \hat{\textbf{f}}(x), \hat{\textbf{f}}(x') \right) = k_{\text{NN}}(x,x')
\begin{bmatrix}
\alpha_1^2 + \sum_{d=2}^{C} \alpha_{1d}^2 & \alpha_{12}^2 & \cdots & \alpha_{1C}^2 \\
\alpha_{12}^2 & \alpha_{12}^2 + \alpha_2^2 + \sum_{d=3}^C \alpha_{2d}^2 & \cdots & \alpha_{2C}^2 \\
\vdots & \vdots & \ddots & \vdots \\
\alpha_{1C}^2 & \alpha_{2C}^2 & \cdots & \sum_{d=1}^{C-1} \alpha_{dC}^2 + \alpha_C^2
\end{bmatrix}.
\end{equation}
This proves that $\hat{\textbf{f}}$ will converge in distribution to a GP with zero mean and ICM kernel in the infinite width limit \emph{a priori} when there exists a total of $C$ treatments, i.e. $T \in \{1, ..., C\}$. 

\section{Two-Network Architecture for CMDE}
\renewcommand\thefigure{\thesection\arabic{figure}}
\setcounter{figure}{0}
\label{appx:C}
The architecture of each baselearner in CMDE as presented in Section \ref{sec:3} can be simplified to the following two-network architecture (i.e. $f_A$ and $f_B$):
\begin{align}
&\hat{Y}^{(0)} = \hat{f}_0(X) \coloneqq \alpha_0 f_A(X) + \beta_0 f_B(X), \\
&\hat{Y}^{(1)} = \hat{f}_1(X) \coloneqq \alpha_1 f_A(X) + \beta_1 f_B(X).
\end{align}
Following a similar procedure as given in Appendices \ref{appx:A} and \ref{appx:B}, we have:
\begin{align}
\mathbb{E}[\hat{f}_0(x)\hat{f}_0(x')] &= \alpha_0^2 \mathbb{E}[f_A(x) f_A(x')] + \beta_0^2 \mathbb{E}[f_B(x) f_B(x')] = (\alpha_0^2 + \beta_0^2)k_{\text{NN}}(x, x'), \\
\mathbb{E}[\hat{f}_0(x)\hat{f}_1(x')] & = \alpha_0 \alpha_1 \mathbb{E}[f_A(x) f_A(x')] + \beta_0 \beta_1 \mathbb{E}[f_B(x)f_B(x')] = (\alpha_0 \alpha_1 + \beta_0 \beta_1) k_{\text{NN}}(x, x'), \\
\mathbb{E}[\hat{f}_1(x)\hat{f}_0(x')] &= \mathbb{E}[\hat{f}_0(x)\hat{f}_1(x')] = (\alpha_0 \alpha_1 + \beta_0 \beta_1) k_{\text{NN}}(x, x'), \\
\mathbb{E}[\hat{f}_1(x)\hat{f}_1(x')] &= \alpha_1^2 \mathbb{E}[f_A(x) f_A(x')] + \beta_1^2 \mathbb{E}[f_B(x) f_B(x')] = (\alpha_1^2 + \beta_1^2) k_{\text{NN}}(x, x').
\end{align}
The covariance function then becomes:
\begin{equation}
\text{cov} \left( \hat{\textbf{f}}(x), \hat{\textbf{f}}(x') \right) = k_{\text{NN}}(x, x') 
\begin{bmatrix}
\alpha_0^2 + \beta_0^2 & \alpha_0 \alpha_1 + \beta_0 \beta_1 \\
\alpha_0 \alpha_1 + \beta_0 \beta_1 & \alpha_1^2 + \beta_1^2
\end{bmatrix}.
\end{equation}
Therefore, $\hat{\textbf{f}}$ will still converge in distribution to a GP with zero mean and ICM kernel in the infinite width limit \emph{a priori} with this two-network architecture.

\section{Details of Experimental Setup}
\renewcommand\thefigure{\thesection\arabic{figure}}
\setcounter{figure}{0}
\label{appx:D}

\subsection{Synthetic Dataset}
\label{appx:D.1}
We construct the synthetic dataset in Section \ref{sec:5.1} by following the steps below. For $i = 1, 2, ..., N$, do
\begin{align*}
x_i &\sim \mathcal{N} \left( 0, \sigma_x^2 \right), \\
t_i &\sim \text{Bern}(p_i) \quad \text{where} \quad p_i = \frac{1}{1+\exp(-x_i)}, \\
\xi_i &\sim \mathcal{N} \left( 0, \sigma_{\xi}^2 \right), \\
\mu_i^{(0)} &= 1 + \frac{1}{1 + \exp(-x_i)}, \\
\mu_i^{(1)} &= 2 + \frac{1}{1 + \exp(-x_i)}, \\
y_i^{(0)} &= \mu_i^{(0)} + \xi_i, \\
y_i^{(1)} &= \mu_i^{(1)} + \xi_i, \\
y_i &= y_i^{(0)} \quad \text{if} \quad t_i = 0 \quad \text{else} \quad y_i^{(1)}.
\end{align*}
For our experiment, we set $\sigma_x^2 = 9$ and $\sigma_{\xi}^2 = 0.0025$ and sample $N = 3000$ data points. The CMDE model consists of 10 estimators where $f_H$, $f_T$, and $f_{HT}$ in each estimator are single-hidden-layer neural networks with ReLU activation and 2048 units in the hidden layer. We set the initial values of $\alpha_H$, $\alpha_T$, and $\alpha_{HT}$ to be $\alpha_H = 0$, $\alpha_T = 0$, and $\alpha_{HT} = 1$. All weight and bias parameters in $f_H$, $f_T$, and $f_{HT}$ are independently drawn from a normal distribution $\mathcal{N}(0, \sigma_w^2 I)$ \emph{a priori} and $\sigma_w^2 = 0.1$. 

\subsection{Benchmark Datasets}
\label{appx:D.2}
We elaborate the data pre-processing details in the sub-sections below. The model hyperparameter details are listed in Table \ref{tab:2}. Also, note that for Twins and Jobs dataset, we use both the training and validation set to evaluate the models for in-sample setting and just the test set to evaluate the models for out-of-sample setting. We repeat the experiments on Twins and Jobs dataset 10 times and report the mean and the standard deviation as given in Table \ref{tab:1}.

\subsubsection{Atlantic Causal Inference Conference (ACIC) Dataset}
\label{appx:D.2.1}
The covariates in ACIC2019 dataset are either simulated or drawn from publicly available datasets. We take the high-dimensional version (where we have 185 covariates in total) for our experiments. The full training and test sets contain a total of 6.4M and 16K data points, respectively. Due to time and memory constraints, we pick a small subset containing 2000 data points from each of the training and test sets. The download link is provided below: \\
\url{https://sites.google.com/view/acic2019datachallenge/data-challenge?pli=1}

\subsubsection{Twins Dataset}
\label{appx:D.2.2}
The Twins dataset contains the information of twin births in the United States from 1989 to 1991. It contains 40 covariates pertaining to pregnancy, twin births, and parents. The treatment is defined as $T = 1$ as being the heavier twin and $T = 0$ as being the lighter twin. The outcome is defined as the 1-year mortality. The full dataset contains a total of 11400 data points and we average over 10 train-validation-test splits with a ratio of 56:24:20.

\subsubsection{Jobs Dataset}
\label{appx:D.2.3}
The Jobs dataset studied by LaLonde is a widely used benchmark where the treatment $T$ is job training and the outcome $Y$ is the individual's income in 1975. The covariates include 8 variables such as age, education, race, and income in 1974. The dataset consists of a randomized portion based on the National Supported Work program (722 samples) and a non-randomized portion acquired from observational studies (2490 samples). Before conducting the experiment, we convert $Y$ (income in 1975) into binary outcomes (i.e. employed/unemployed or $\mathbbm{1}[Y = 0]$). The test set is sampled \emph{only from the randomized portion} and we average over 10 train-validation-test splits with a ratio of 56:24:20.

\begin{table}[t!]
\setlength{\tabcolsep}{0.9em}
\centering
\begin{tabular}{l|c|c|c} 
\hline
~                                                        & ACIC                                                                                                 & Twins & Jobs  \\ 
\hline
CMDE                                                     & \begin{tabular}[c]{@{}c@{}}number of estimators $= 10$\\LMC kernel ($Q = 2$),\\depth $= 2$, width $= 512$, \\$\alpha_H^1 = \alpha_T^1 = \alpha_{HT}^1 = 1$,\\$\alpha_H^2 = \alpha_T^2 = \alpha_{HT}^2 = 1$, \\softplus activation ~\end{tabular} & \begin{tabular}[c]{@{}c@{}}number of estimators $= 10$\\LMC kernel ($Q = 2$),\\depth $= 2$, width $= 512$, \\$\alpha_H^1 = \alpha_T^1 = 1, \alpha_{HT}^1 = 0.1$,\\$\alpha_H^2 = \alpha_T^2 = 1, \alpha_{HT}^2 = 0.1$, \\tanh activation ~\end{tabular}     & \begin{tabular}[c]{@{}c@{}}number of estimators $= 10$\\LMC kernel ($Q = 2$),\\depth $= 2$, width $= 512$, \\$\alpha_H^1 = \alpha_T^1 = 1, \alpha_{HT}^1 = 0.1$,\\$\alpha_H^2 = \alpha_T^2 = 1, \alpha_{HT}^2 = 0.1$, \\tanh activation ~\end{tabular}     \\ 
\hline
CMGP                                                     & \begin{tabular}[c]{@{}c@{}}LMC kernel with \\RBF base kernel~\end{tabular}                                                                                                   & N/A     & \begin{tabular}[c]{@{}c@{}}LMC kernel with \\RBF base kernel~\end{tabular}     \\ 
\hline
CEVAE                                                    & $\dagger$                                                                                                    & $\dagger$     & $\dagger$     \\ 
\hline
GANITE                                                   & \begin{tabular}[c]{@{}c@{}}$k_G = k_I = 256$, \\depth $= 0$, $h_{dim} = 100$, \\$\alpha = 0.1$, $\beta = 0$ ~\end{tabular}                                                                                                    & \begin{tabular}[c]{@{}c@{}}$k_G = k_I = 128$, \\depth $= 5$, $h_{dim} = 8$, \\$\alpha = 2$, $\beta = 2$ ~\end{tabular}     & \begin{tabular}[c]{@{}c@{}}$k_G = k_I = 128$, \\depth $= 3$, $h_{dim} = 4$, \\$\alpha = 1$, $\beta = 5$ ~\end{tabular}     \\ 
\hline
\begin{tabular}[c]{@{}l@{}}X-learner-\\RF\end{tabular}   & number of estimators = $100$                                                                                                    & number of estimators = $100$     & N/A     \\ 
\hline
\begin{tabular}[c]{@{}l@{}}X-learner-\\BART\end{tabular} & number of estimators = $100$                                                                                                    & number of estimators = $100$     & N/A     \\ 
\hline
\begin{tabular}[c]{@{}l@{}}CFRNet-\\Wass\end{tabular}    & \begin{tabular}[c]{@{}c@{}}depth ($\phi$ and $h$) $= 2$, \\width ($\phi$ and $h$) $= 512$, \\$\alpha = 0.1$, ReLU activation ~\end{tabular}                                                                                                    & \begin{tabular}[c]{@{}c@{}}depth ($\phi$ and $h$) $= 2$, \\width ($\phi$ and $h$) $= 512$, \\$\alpha = 1$, tanh activation ~\end{tabular}     & \begin{tabular}[c]{@{}c@{}}depth ($\phi$ and $h$) $= 2$, \\width ($\phi$ and $h$) $= 512$, \\$\alpha = 1$, tanh activation ~\end{tabular}     \\ 
\hline
\begin{tabular}[c]{@{}l@{}}CFRNet-\\MMD\end{tabular}     & \begin{tabular}[c]{@{}c@{}}depth ($\phi$ and $h$) $= 2$, \\width ($\phi$ and $h$) $= 512$, \\$\alpha = 0.1$, ReLU activation ~\end{tabular}                                                                                                    & \begin{tabular}[c]{@{}c@{}}depth ($\phi$ and $h$) $= 2$, \\width ($\phi$ and $h$) $= 512$, \\$\alpha = 1$, tanh activation ~\end{tabular}     & \begin{tabular}[c]{@{}c@{}}depth ($\phi$ and $h$) $= 2$, \\width ($\phi$ and $h$) $= 512$, \\$\alpha = 1$, tanh activation ~\end{tabular}     \\ 
\hline
DONUT                                                    & \begin{tabular}[c]{@{}c@{}}depth ($\phi$ and $h$) $= 2$, \\width ($\phi$ and $h$) $= 512$, \\ReLU activation ~\end{tabular}                                                                                                    & \begin{tabular}[c]{@{}c@{}}depth ($\phi$ and $h$) $= 2$, \\width ($\phi$ and $h$) $= 512$, \\tanh activation ~\end{tabular}     & \begin{tabular}[c]{@{}c@{}}depth ($\phi$ and $h$) $= 2$, \\width ($\phi$ and $h$) $= 512$, \\tanh activation ~\end{tabular}     \\
\hline
\end{tabular}
\caption{Model hyperparameters used for CMDE and other benchmark models in the ACIC2019, Twins, and Jobs experiments. $\dagger$ To save space, for CEVAE, please refer to the source code for hyperparameter details.}
\label{tab:2}
\end{table}

\subsection{Datasets with Multi-modal Covariates}
\label{appx:D.3}
We give the details of data generation and pre-processing for each experiment in the sub-sections below. For CMDE and benchmark deep causal models, we use a convolutional neural network (CNN) architecture when we have images as covariates and a fully connected neural network architecture when we have tabular data (e.g., demographic information in COVID-19 dataset) as covariates. The details of model architectures are displayed in Table \ref{tab:3}. We repeat the experiments on both datasets 10 times and report the corresponding statistics as given in Figures \ref{fig:4} and \ref{fig:5}.

\subsubsection{Data Generation Procedure of the Semi-synthetic COVID-19 Dataset}
\label{appx:D.3.1}
We create a semi-synthetic dataset based on a publicly available COVID-19 X-ray dataset. The dataset includes 951 images and some other demographic and image-related information, collected from several public sources \cite{cohen2020covid}. After data cleaning and imputation, 857 samples are used in our analysis and we use a train-validation-test splito ratio of 40:20:40. The variables we include are patient id, offset, sex, age, RT-PCR-positive, survival, intubated, intubation-present, went-icu, in-icu.

We generate potential outcomes using both demographic and image information. For image, we categorize the diagnosis of the X-ray into the following categories: viral pneumonia, bacterial pneumonia, fungal pneumonia, pneumonia caused by other causes (lipoid and aspiration), pneumonia by unknown cause, and tuberculosis. We use these categories to represent image information. The potential outcome $Y$ is defined as general overall severity of diseases and $T$ is the binary treatment. Larger value of $Y$ indicates worse prognosis. The potential outcomes are generated by the following equations:
  \begin{align*}
    Y(0)
  &= \beta_0 + \beta_1^T X_{\text{d}} + \beta_2^T X_{\text{im}} +
   \beta_3^T X_{\text{d}} \otimes X_{\text{d}} + \epsilon_0,
  \\ 
     Y(1)
  &= \beta_0 + \beta_1^T X_{\text{d}} + \beta_2^T X_{\text{im}} +
   \beta_3^T X_{\text{d}} \otimes X_{\text{d}} + 
  \beta_{t1}^T X_{\text{im}} + \beta_{t2}^T X_{\text{d}} \otimes X_{\text{im}}
  +\epsilon_1,
  \end{align*} 
where $ \epsilon_0$ ,$\epsilon_1 \sim N(0,0.1) $ , $X_{\text{d}}$ is the vector of demographic variables, $X_{\text{im}}$ is the vector of diagnosis categories and $\otimes$ is the symbol of Kronecker product. We assign values of $\beta$ in a clinically meaningful way and $\beta_3$ and $\beta_{t2}$ are sparse matrices in the sense that only a few variables would interact with each other. More details of the data generating process such as the exact values of $\beta$ could be found in our source code.

For treatment, we consider two main scenarios: observational study randomized study. The results of observational study setting are shown in Figure \ref{fig:4}, where treatment depends on some covariates: 
  \begin{align*}
   p(x_i) &= \frac{1}{ 1+ \exp(-\beta_t x_i) },  \\ 
T &\sim \text{Bern}\left(\frac{ p_2 p(x_i)}{  p(X) /N }\right),
  \end{align*} 
where $ X = (X_{\text{d}} , X_{\text{im}} ), p_2=\{0.1,0.2,0.3,0.4,0.5 \} $. The  $ \frac{ p_2 p(x_i)}{  p(X) /N } $  term is to control the mean of $p(x_i) $ to mimic an unbalanced assignment mechanism and $P(T=1|X)$ is called the propensity score. In randomized study setting, treatment does not depend on any covariates. Specifically, we set $T \sim \text{Bern}(p_1) $ where $ p_1=\{0.1,0.2,0.3,0.4,0.5\} $ to mimic an unbalanced assignment mechanism. The results of CATE estimation in this setting are presented in Figure \ref{fig:D1}. It turns out that the conclusions derived from this figure are very similar to the ones we state in Section \ref{sec:5.3}.

\subsubsection{Data Pre-processing Procedure of the STAR Dataset}
\label{appx:D.3.2}
The effect of class size on student's achievement is an important topic in the American K-12 education system. To study the effect, the State Department of Education in Tennessee conducted a four-year longitudinal, class-size randomized study called The Student/Teacher Achievement Ratio (STAR) from 1985 to 1989. Using the first graders' data from the STAR project, we estimate the effect of class size on class-level mathematical performance on a standardized test. Since the original dataset corresponds to a randomized study, the true effect of class size can be estimated directly. Following similar covariate selection as \citet{deshpandedeep}, we use highest degree obtained by teacher, career ladder position of teacher, number of years of experience of teacher, and teacher’s race as numerical features. To construct multi-modal covariates, the students' gender and ethnicity are replaced by images of corresponding characteristics from the UTK dataset \cite{zhang2017age}. All categorical covariates are converted into one-hot encoding. Test scores and all continuous covariates are normalized using min-max normalization. Observations with any missing values are filtered out, resulting in a total of 6563 data points. The train-validation-test split ratio is again set to be 40:20:40.

\begin{figure*}[t!]
\begin{minipage}[c]{0.68\textwidth}
    \includegraphics[width=\textwidth]{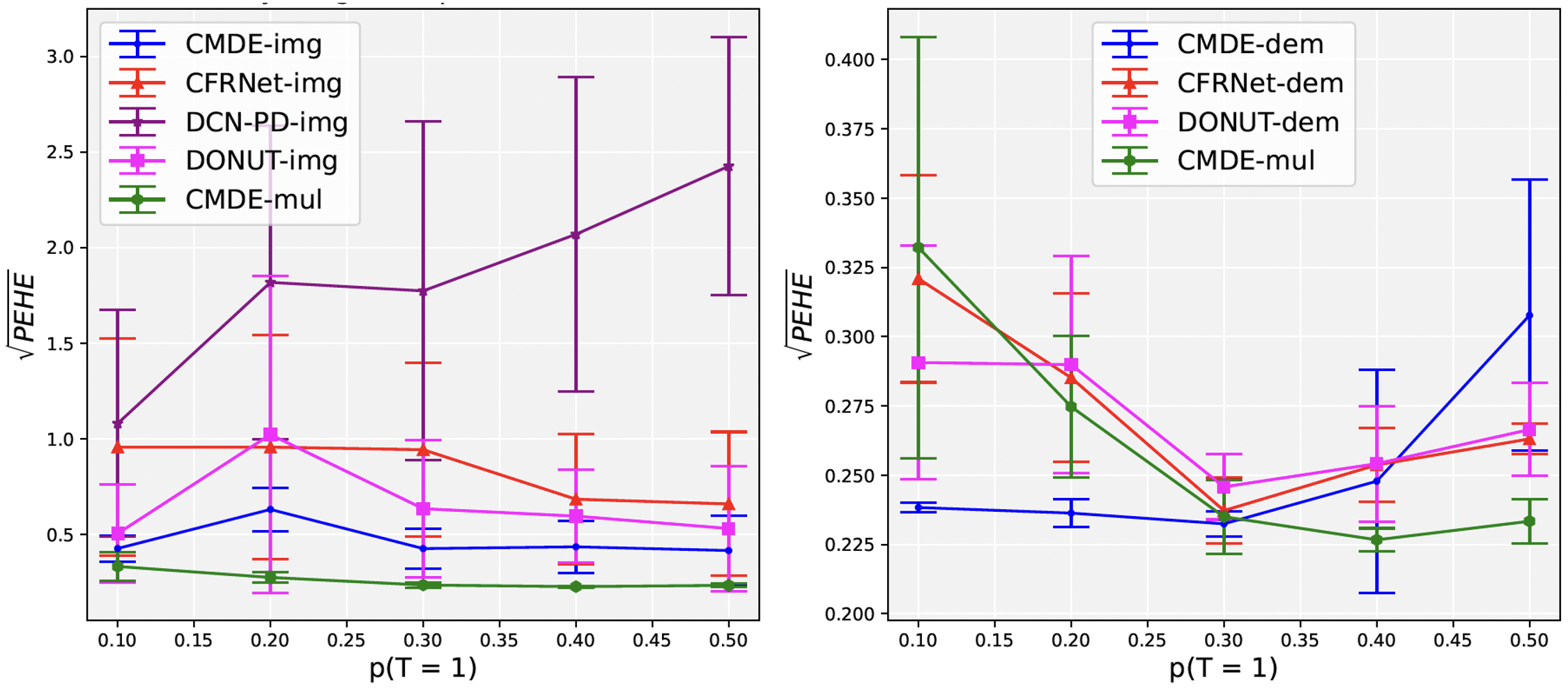}
\end{minipage}\hfill
\begin{minipage}[c]{0.30\textwidth}
    \caption{Results of CATE estimation $\left(\sqrt{\epsilon_{\text{PEHE}}}\right)$ on the semi-synthetic COVID-19 dataset in randomized study setting where the covariates are either X-ray images (left) or demographic information (right). CMDE with multi-modal covariates (both images and demographic information) are marked as CMDE-mul in both figures. The lines and error bars represent mean and half of the standard deviation of $\sqrt{\epsilon_{\text{PEHE}}}$, respectively. The error of DCN-PD with demographic information is too high so we do not show it in the right figure for better visualization.}
    \label{fig:D1}
\end{minipage}
\end{figure*}

\begin{table}[t!]
\centering
\resizebox{\textwidth}{!}{
\begin{tabular}{l|c|c|c|c} 
\hline
\multirow{2}{*}{} & \multicolumn{2}{c|}{COVID-19 dataset}      & \multicolumn{2}{c}{STAR dataset}            \\ 
\cline{2-5}
                  & Image covariates & Tabular data covariates & Image covariates & Tabular data covariates  \\ 
\hline
CMDE              & \begin{tabular}[c]{@{}c@{}}\#estimators $= 10$\\ICM kernel, depth $= 2$, \\\#channels in hidden \\layers $= 64$, \\filter size $= 3$, \\stride $= 1$ \\$\alpha_H = \alpha_T = 1$, \\$\alpha_{HT} = 0.5$, \\softplus activation ~\end{tabular}                 & \begin{tabular}[c]{@{}c@{}}\#estimators $= 10$\\ICM kernel, depth $= 2$, \\width $= 512$, \\$\alpha_H = \alpha_T = 1$, \\$\alpha_{HT} = 0.5$, \\softplus activation ~\end{tabular}                        & \begin{tabular}[c]{@{}c@{}}\#estimators $= 10$\\ICM kernel, depth $= 2$, \\\#channels in hidden \\layers $= 64$, \\filter size $= 3$, \\stride $= 1$ \\$\alpha_H = \alpha_T = 1$, \\$\alpha_{HT} = 0.1$, \\softplus activation ~\end{tabular}                 & \begin{tabular}[c]{@{}c@{}}\#estimators $= 10$\\ICM kernel, depth $= 2$, \\width $= 512$, \\$\alpha_H = \alpha_T = 1$, \\$\alpha_{HT} = 0.1$, \\softplus activation ~\end{tabular}                         \\ 
\hline
CFRNet            & \begin{tabular}[c]{@{}c@{}}depth ($\phi$ and $h$) $= 2$, \\\#channels in hidden \\layers of $\phi$ and $h = 64$,\\filter size $= 3$, \\stride $= 1$, \\$\alpha = 0.01$, \\ softplus activation ~\end{tabular}                 & \begin{tabular}[c]{@{}c@{}}depth ($\phi$ and $h$) $= 2$, \\width ($\phi$ and $h$) $ = 512$, \\$\alpha = 0.01$, \\softplus activation ~\end{tabular}                        & \begin{tabular}[c]{@{}c@{}}depth ($\phi$ and $h$) $= 2$, \\\#channels in hidden \\layers of $\phi$ and $h = 64$,\\filter size $= 3$, \\stride $= 1$, \\$\alpha = 0.01$, \\ softplus activation ~\end{tabular}                 & \begin{tabular}[c]{@{}c@{}}depth ($\phi$ and $h$) $= 2$, \\width ($\phi$ and $h$) $ = 512$, \\$\alpha = 0.01$, \\softplus activation ~\end{tabular}                         \\ 
\hline
DCN-PD            & \begin{tabular}[c]{@{}c@{}}depth (shared and \\idiosyncratic \\networks) $= 2$, \\\#channels in hidden layers \\of shared and \\idiosyncratic \\networks $ = 64$, \\filter size $= 3$, \\stride $= 1$, \\softplus activation ~\end{tabular}                 & \begin{tabular}[c]{@{}c@{}}depth (shared and \\ idiosyncratic \\networks) $= 2$, \\width (shared and \\idiosyncratic \\networks) $ = 512$, \\softplus activation ~\end{tabular}                        & \begin{tabular}[c]{@{}c@{}}depth (shared and \\idiosyncratic \\networks) $= 2$, \\\#channels in hidden layers \\of shared and \\idiosyncratic \\networks $ = 64$, \\filter size $= 3$, \\stride $= 1$, \\softplus activation ~\end{tabular}                 & \begin{tabular}[c]{@{}c@{}}depth (shared and \\ idiosyncratic \\networks) $= 2$, \\width (shared and \\idiosyncratic \\networks) $ = 512$, \\softplus activation ~\end{tabular}                         \\ 
\hline
DONUT             & \begin{tabular}[c]{@{}c@{}}depth ($\phi$ and $h$) $= 2$, \\\#channels in hidden layers \\of $\phi$ and $h = 64$, \\filter size $= 3$, \\stride $= 1$, \\softplus activation ~\end{tabular}                 & \begin{tabular}[c]{@{}c@{}}depth ($\phi$ and $h$) $= 2$, \\width ($\phi$ and $h$) $ = 512$, \\softplus activation ~\end{tabular}                        & \begin{tabular}[c]{@{}c@{}}depth ($\phi$ and $h$) $= 2$, \\\#channels in hidden layers \\of $\phi$ and $h = 64$, \\filter size $= 3$, \\stride $= 1$, \\softplus activation ~\end{tabular}                 & \begin{tabular}[c]{@{}c@{}}depth ($\phi$ and $h$) $= 2$, \\width ($\phi$ and $h$) $ = 512$, \\softplus activation ~\end{tabular}                         \\
\hline
\end{tabular}}
\caption{Model hyperparameters used for CMDE and other deep causal benchmark models in the COVID-19 X-ray image and STAR experiments}
\label{tab:3}
\end{table}
\newpage

\section{Discussion on Fine-tuning of Coefficients}
\renewcommand\thefigure{\thesection\arabic{figure}}
\setcounter{figure}{0}
\label{appx:E}

\begin{figure}[t!]
\centering
\begin{subfigure}[t]{0.315\textwidth}
    \includegraphics[width=\linewidth]{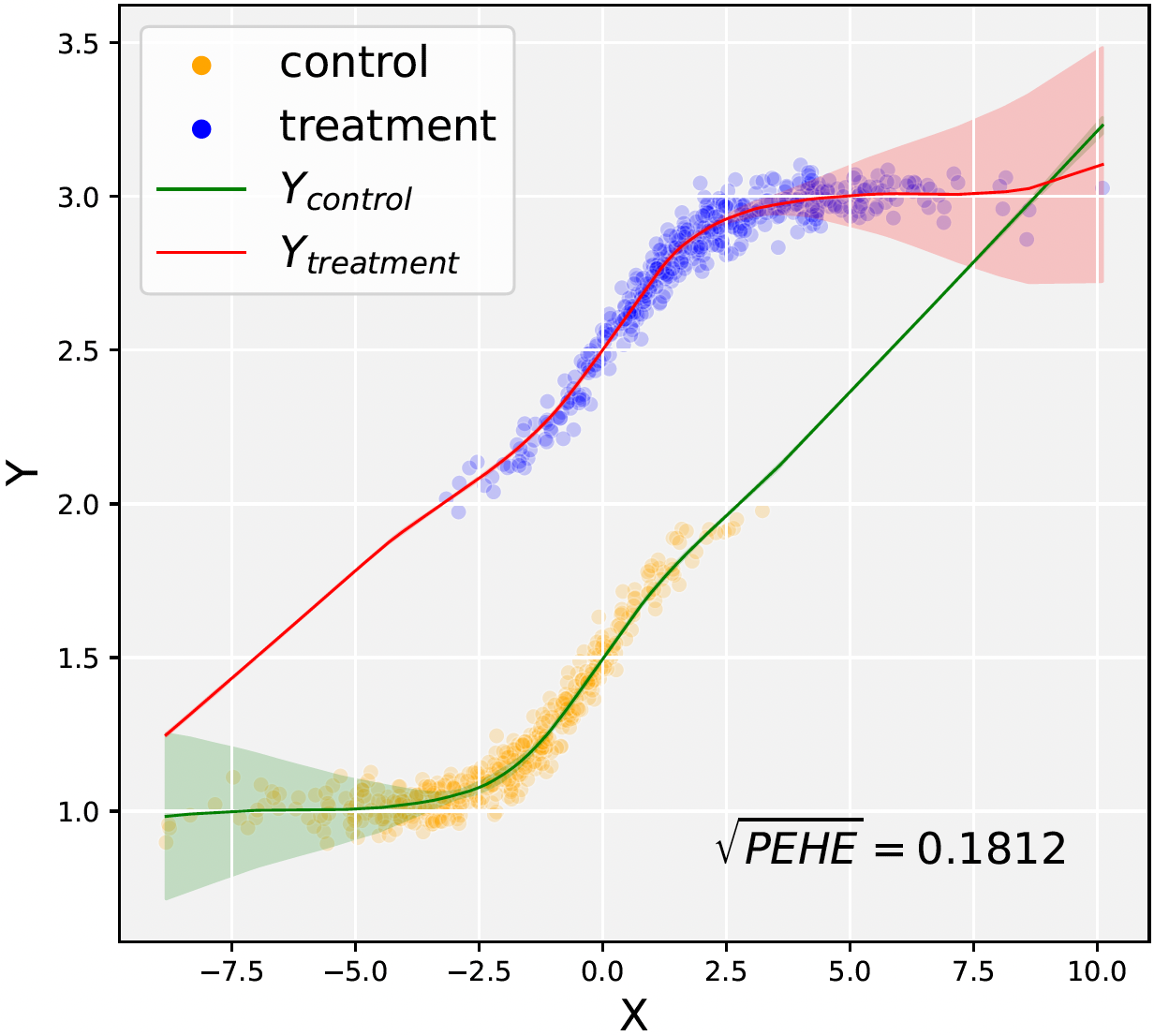}
    \label{fig:E1a}
\end{subfigure}
\begin{subfigure}[t]{0.315\textwidth}
    \includegraphics[width=\linewidth]{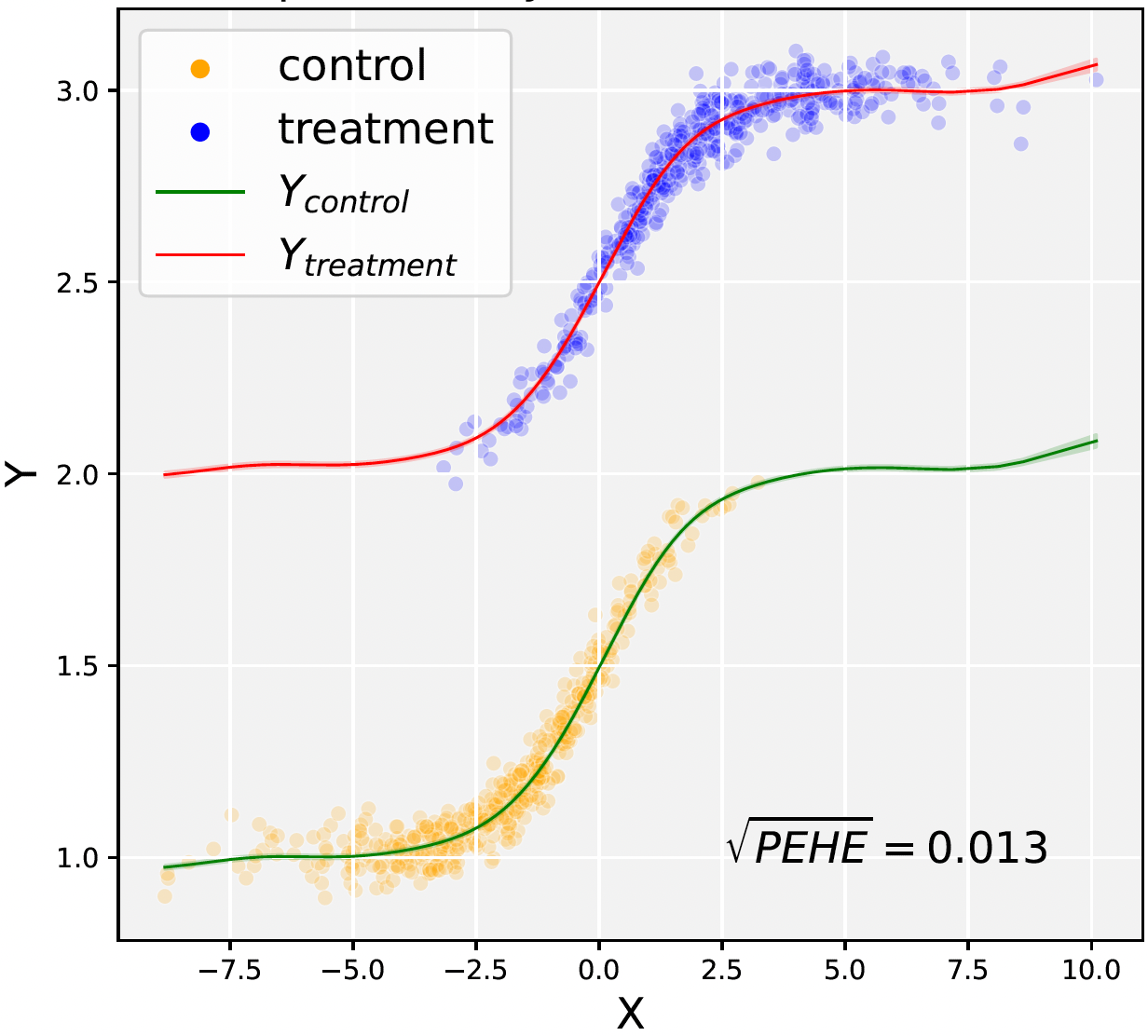}
    \label{fig:E1b}
\end{subfigure}
\begin{subfigure}[t]{0.315\textwidth}
    \includegraphics[width=\linewidth]{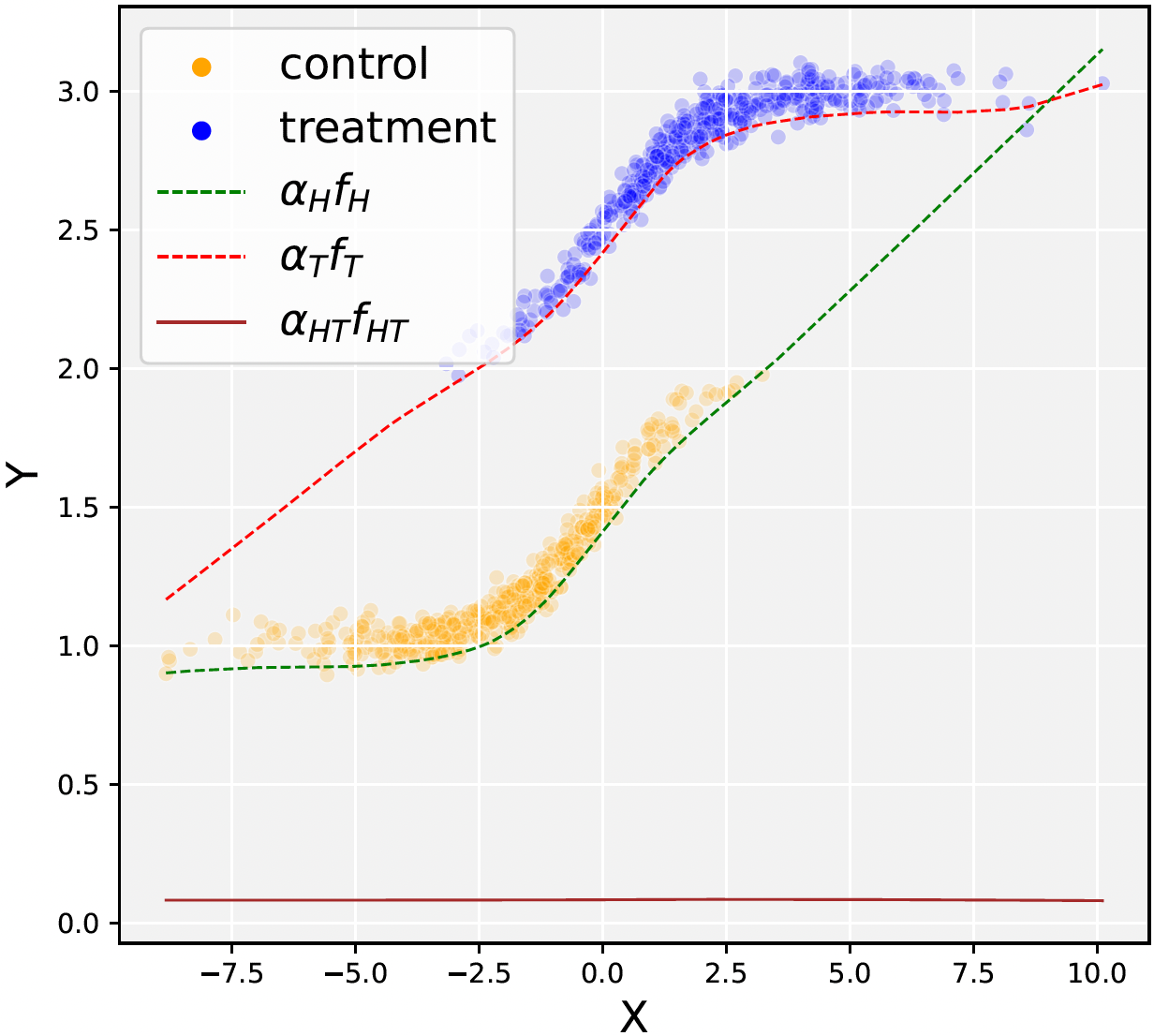}
    \label{fig:E1c}
\end{subfigure}
\vspace{-3mm}
\caption{Predictions for the control and treatment groups on the synthetic dataset by CMDE (left) with $\alpha_H = 1, \alpha_T = 1, \alpha_{HT} = 0$ and multi-task GP with ICM kernel (middle) where dots represent observed samples, lines represent mean predictions, and shaded regions represent predicted values within 2 standard deviations. In addition, we also plot the contribution of group-specific and shared components for CMDE (right). It can be observed that with inappropriate coefficient initializaiton, CMDE fails to capture the response surfaces of the control and treatment groups and yields a much larger error on treatment effect estimation than multi-task GP.}
\label{fig:E1}
\end{figure}

\textcolor{black}{As mentioned in the Discussion, CMDE is sensitive to the initial values of the coefficients applied in NNs (e.g. $\alpha_H$, $\alpha_T$, and $\alpha_{HT}$), thus requiring us to have some prior knowledge about which type of information, group-specific or shared, is more dominant. This is dependent on the task at hand. For example, in the synthetic data example in Section \ref{sec:5.1}, the shared information between the two groups (i.e, shape of the two curves) is important for estimating the potential outcomes when there is no group-specific information. Note that we refer to the group-specific information as the distinct features of the covariates $X$ in each group, and here the two curves are only off by a constant instead of anything related to the covariate $X$ (see Appendix D.1 for the data generation procedure). Therefore, we initialize the coefficients to be $\alpha_H = \alpha_T = 0$ and $\alpha_{HT} = 1$. To verify this point, we also repeat this experiment by setting $\alpha_H = \alpha_T = 1$ and $\alpha_{HT} = 0$. We realize that without correctly capturing the shape of the two curves, CMDE performs much worse on the treatment effect estimation as shown in Figure \ref{fig:E1}. On the contrary, the Twins dataset contains 40 covariates pertaining to pregnancy, twin births, and parents. The treatment is defined as $T = 1$ as being the heavier twin and $T = 0$ as being the lighter twin. The outcome is defined as the 1-year mortality rate. Here, our prior knowledge is that the unique features of each child will contribute more to the mortality rate, so we initialize the coefficients to be $\alpha_H = \alpha_T = 1$ and $\alpha_{HT} = 0.1$.} 

\section{Accessibility of the Datasets}
\renewcommand\thefigure{\thesection\arabic{figure}}
\setcounter{figure}{0}
\label{appx:F}
All datasets used in our experiments are available at \url{https://github.com/jzy95310/ICK/tree/main/data} and are released under the MIT license. For ACIC, Twins, Jobs, and STAR, the original datasets have an open-access license and are publicly available. For COVID-19 experiment, the original dataset is available at: \url{https://github.com/ieee8023/covid-chestxray-dataset} where each image has a specified license including Apache 2.0, CC BY-NC-SA 4.0, and CC BY 4.0. All other files and scripts are released under a CC BY-NC-SA 4.0 license.


\end{document}